\documentclass{article}
\usepackage{ijcai26}

\usepackage{times}
\usepackage{soul}
\usepackage{url}
\usepackage[hidelinks]{hyperref}
\usepackage[utf8]{inputenc}
\usepackage[small]{caption}
\usepackage{graphicx}
\usepackage{amssymb,amsmath,amsfonts}
\usepackage{amsthm}
\usepackage{booktabs}
\usepackage{algorithm}
\usepackage{algorithmic}
\usepackage[switch]{lineno}
\usepackage{booktabs}
\usepackage{multirow}
\usepackage{subfigure}
\usepackage{makecell}
\usepackage{enumerate}

\newtheorem{lemma}{Lemma}

\title{Strikingness-Aware Evaluation for Temporal Knowledge Graph Reasoning}

\author{
Rikui Huang$^{1,2,3}$
\and
Shengzhe Zhang$^2$\And
Wei Wei$^{1,2,3,}$\thanks{Corresponding author}
\affiliations
$^1$School of Computer Science \& Technology, Huazhong University of Science and Technology \\
$^2$Institute of Artificial Intelligence, Huazhong University of Science and Technology\\
$^3$School of Artificial Intelligence \& Automation, Huazhong University of Science and Technology\\
\emails
\{huangrk, zsz, weiw\}@hust.edu.com
}

\begin{document}

\maketitle

\begin{abstract}
Temporal Knowledge Graph Reasoning (TKGR) aims at inferring missing (especially future) events from historical data. Current evaluation in TKGR uniformly weights all events, ignoring that most are trivial repetitions, which overestimate the true reasoning ability. Therefore, the rare outstanding events, whose prediction demands deeper reasoning, should be distinguished and emphasized. To this end, we propose a strikingness-aware evaluation framework, which introduces a rule-based strikingness measuring framework (RSMF) to quantify event strikingness by comparing its expected occurrence with peer events derived from temporal rules. Strikingness is then integrated as a weighting factor into metrics like weighted MRR and Hits@k.
Experiments on four TKG benchmarks reveal: 1) All representative models perform worse as event strikingness increases, 2) Path-based methods excel on low-strikingness events and representation-based ones on high-strikingness events, 3) We design an ensemble method whose gains stem from fitting trivial events rather than reasoning improvement. 
Our framework provides a more rigorous evaluation, refocusing the field on predicting outstanding events.
\end{abstract}

\section{Introduction}
Recent advances in Temporal Knowledge Graph Reasoning (TKGR) have led to substantial progress, which can be broadly categorized into two classes according to whether they forecast future events: interpolation and extrapolation reasoning ~\cite{jin2020recurrent}. The former refers to inferring missing historical facts, while the latter involves predicting future events, also named temporal knowledge graph forecasting \cite{TITer}. This work primarily focuses on extrapolation reasoning, which is essential for many high-risk applications like financial risk control~\cite{aven2013meaning}.

Despite promising empirical results~\cite{liang2024survey}, many reported gains may largely arise from data biases, leading to misjudgment of advancements in the field~\cite{KervadecAB021}. A historical parallel exists in the static KGR field, potential data leakage in well-known benchmarks (WN18, FB15k) led to an overestimation of the reasoning capabilities of models \cite{Kristina2015,dettmers2018convolutional}. Over 94\% and 81\% of queries, such as (A, \textit{hypernym}, ?), in WN18 and FB15k can easily be mapped to a training triple (B, \textit{hyponym}, A) if it is known that hyponym is the inverse of hypernym. Recently, a similar phenomenon is emerging in the TKGR field: over 80\% of events have occurred in prior history in the ICEWS data \cite{CyG-Net}. It may enable heuristic-based predictions, inflating state-of-the-art (SOTA) performance, under the existing TKGR evaluation framework, on common events while obscuring poor accuracy on fewer than 10\% of truly challenging, striking cases. For example, given a query such as $(A, MakeVisit, ?, T_q)$, they may output an answer $B$ by selecting either the most frequently occurring historical event $(A, MakeVisit, B, T_i)$~\cite{ICL,CENET}. It raises doubts about the predictive quality of current TKGR methods and whether the current evaluation framework could reasonably reflect these models' forecasting capabilities \cite{Recurrency}.

The above flaw in static KGR has been addressed by removing inverse relation triples from their training sets, i.e., creating WN18RR \cite{dettmers2018convolutional} and FB15k-237 \cite{Kristina2015}. However, this straightforward removal strategy is fundamentally inapplicable to TKGR, since all historical events, even repetitive ones, constitute essential evidence for forecasting the future. This dilemma raises a critical question: How to construct a more meaningful evaluation framework for TKGR without deleting data?

A principled feasible alternative is to re-weight test-instance gains, rather than treating all instances uniformly. Specifically, trivial events such as (A, \textit{MakeVisit}, B) occurring across different timestamps frequently should be assigned lower weights. In contrast, rarer outstanding events like (A, \textit{Sign Agreement}, B), which require deeper temporal reasoning, should be emphasized. Generally, accurately inferring outstanding events offers far greater practical value than merely predicting numerous trivial ones. However, measuring the weights and automatically identifying outstanding ones from a large volume of trivial events is nontrivial. While there have been some efforts to measure strikingness of facts in static KGs, there is a notable lack of studies in TKGs field. This gap stems from two challenges: First, beyond the statistical features, a comprehensive TKGR evaluation framework necessitates incorporating both semantic and temporal relevance. Second, since the ground-truth impact of a future event is unknowable in advance, any measure of its strikingness can only be derived from observable historical patterns.

To address this, we propose a \textbf{R}ule-based \textbf{S}trikingness \textbf{M}easuring \textbf{F}ramework (\textbf{RSMF}) to measure the strikingness of future events based on historical evidence. RSMF first leverages first-order temporal rules to retrieve peer events for the target event. Subsequently, it computes the expected occurrence of candidate events with the semantic confidence of rules, the temporal characteristics of events, and the frequency of event repetition. The strikingness of the future event is derived by contrasting its expected occurrence with that of its peer events. Finally, we construct a strikingness-aware evaluation framework by using strikingness as a weighting factor. Experimentally, we evaluate eight representative baselines under the striking-aware evaluation framework across four widely adopted TKG datasets, including three path-based, three representation-based, and two large language model (LLM)-based approaches. Our contributions and key findings can be summarized as follows:
\begin{itemize}
\item We propose RSMF to quantify event strikingness in TKGs, and build a new corresponding striking-aware TKGR evaluation framework that re-weights test instances with their strikingness.
\item For evaluated models, reasoning performance decreases as event strikingness increases, i.e., events with higher strikingness are more difficult to predict.
\item We find distinct performance patterns across baselines: path-based methods show stronger performance in low-strikingness events, whereas representation-based approaches excel at high-strikingness events.
\item We design an ensemble method combining path- and representation-based models, aiming to leverage their complementary strengths. Consequently, it separately obtains significant and marginal gains in the original and our proposed strikingness-aware framework. Analysis reveals that while the method's gains come from dominant low-strikingness events, whereas, performance on rare high-strikingness events decreases.
\end{itemize}

\section{Related Work}
\paragraph{Temporal Knowledge Graph Reasoning Evaluation} In recent years, researchers have proposed various extrapolation TKGR methods, including graph neural networks-based \cite{li2021temporal,TiRGN,LogCL}, rule-based \cite{liu2022tlogic,TempValid}, reinforcement learning based \cite{TITer,zheng2023dream,DaeMon}, and the increasingly popular large language models-based methods \cite{ICL,liao2024gentkg,xia2024chain}. Alongside these advancements, the common rank-based evaluation, for link prediction like KGR, methods are undergoing continuous refinement. Initially, to address the issue of multiple answers for a single query, correct answers other than the target answer are filtered out during ranking to avoid underestimating model performance \cite{TransE}. Subsequently, to accommodate TKGR, time-aware filtering \cite{xERTE} and time interval prediction evaluation were introduced \cite{jain2020temporal}. Considerable efforts have also been made in re-evaluating the performances of various models to establish fair comparisons \cite{sun2020re,RuffinelliBG20you,TKGFEvaluation}. In addition, a valuable baseline named \textit{Recurrency} highlight flaws in datasets and offered significant insights \cite{Recurrency}. To explore the capability boundaries of TKGR models, some studies attempt to build new benchmark datasets tailored to different scenarios, such as context-aware \cite{ma2023context}, multi-modal \cite{li2024mm}, and large-scale settings \cite{gastinger2024tgb}. However, they may also suffer from the above data biases, as the repetitive pattern is an inherent characteristic of TKGs. 

\paragraph{Outstanding Facts Mining in Knowledge Graph}
Outstanding facts (OFs) mining focuses on quantifying the strikingness of facts. Early research focused on extracting OFs from unstructured data (e.g., text) \cite{angiulli2009detecting,hassan2014data,wu2012one}. Maverick~\cite{zhang2018maverick} firstly measured event strikingness in static KGs with the specific attribute values of entities. FMINER~\cite{yang2021context} introduced context entity constraints and designed a pattern relevance model to optimize the process of event searching. The robustness of measured outstanding events is further explored using perturbation analysis ~\cite{xiao2024avoid}. To the best of our knowledge, our framework is the first principled extension of the established Outstanding Fact Mining paradigm from static KGs to TKGs. Furthermore, we transform a mining technique into a comprehensive evaluation framework, creating weighted metrics that reorient the TKGR field towards valuing outstanding reasoning.

\section{Strikingness-Aware Evaluation}
\subsection{Preliminaries}
\paragraph{Temporal Knowledge Graph Reasoning} A TKG can be represented as a sequence of timestamp KGs, denoted as $\mathcal{G}=\{\mathcal{G}_1, \mathcal{G}_2, $ $\ldots, \mathcal{G}_t\}$. Each KG at a specific timestamp $t$ is defined as $\mathcal{G}_t=\left(\mathcal{E}, \mathcal{R}, \mathcal{F}_t\right)$, where $\mathcal{E}$ is the set of entities, $\mathcal{R}$ represents the set of relations, and $\mathcal{F}_t = \{(s, r, o, t) \}$ refers to the set of events observed at timestamp $t$. Given a query $(s, r, ?, t)$, a reasonable TKGR model is to infer the object $o$ based on the facts observed before $t$, where $s$ and $o$ are subject and object entities, $r$ is a relation, and $t$ is a timestamp. For instance, the query  (\textit{Markieff Morris}, \textit{join}, ?, 2025-02) requires the model to predict \textit{the Lakers} based on events before 2025-02 to validate its forecasting capability in practice. 

\begin{figure*}[ht]
    \centering
    \includegraphics[width=\textwidth]{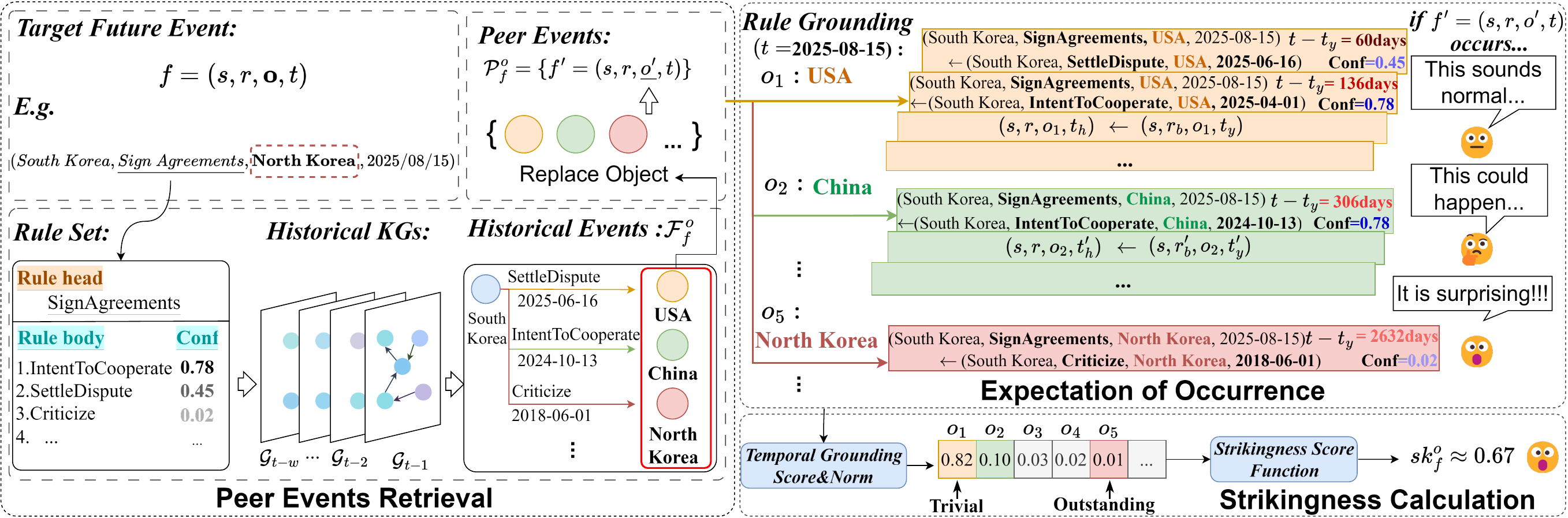}
    \caption{\normalfont An example of strikingness measuring for target future event $(South\ Korea, Sign\ Agreements, North\ Korea, 2025/08/15)$ (replacing object). In \textbf{Peer Events Retrieval}, RMFS retrieves the historical events and constructs the peer events with the rule set. The \textbf{Expectation of Occurrence} and \textbf{Strikingness Calculation} are calculated by the rule grounding and strikingness scoring function.}
    \label{figure:FOE Measuring}
\end{figure*}
\paragraph{Strikingness of Events} Strikingness quantifies how outstanding a target event $f=(s, r, o, t)$ is, which is a continuous value in the range [0, 1]. The closer the value is to 1, the more outstanding the event, and vice versa. Events with low strikingness can be referred to as \textit{trivial events}, while events with high strikingness can be referred to as \textit{outstanding events}. Since it is not meaningful to compare two entirely unrelated events, such as \textit{Markieff Morris will join the Lakers in 2025} and \textit{The Federal Reserve will cut interest rates in 2026}, the strikingness thus is defined by comparing it with peer events $\mathcal{P}$. 

\paragraph{Peer Events} A peer event is a related event of the target event generated by replacing entities or relations. For a target future event $f=(s, r, o, t)$, its peer events are defined as $\mathcal{P}_{f}^{s}=\{(s',r,o,t)|s'\in\mathcal{E}\}$, $\mathcal{P}_{f}^{o}=\{(s,r,o',t)|o'\in\mathcal{E}\}$, and $\mathcal{P}_{f}^{r}=\{(s,r',o,t)|r'\in\mathcal{R}\}$. 

\subsection{Strikingness Measuring}
To measure the strikingness of an event, three key challenges must be addressed: 1) Constructing a set of peer events that can be used for comparison with the target future event, 2) Computing the expectation of occurrence of the target event and its peer events, and 3) Calculating the strikingness score of the target event using the expectation of occurrence. The overall procedure for RSMF is outlined in Figure \ref{figure:FOE Measuring}.

\paragraph{Peer Events Retrieval} Peer events can be obtained by replacing the entities or relations of the target event. However, direct substitution may generate many meaningless peer events, such as (\textit{Markieff  Morris}, \textit{join}, \textit{Microsoft}, 2025-02). Therefore, we utilize temporal rules to constrain the generation of peer events from historical KGs. 

For a target future event $f=(s,r,o,t)$, we first obtain the rule set $TR$ corresponding to the relation $r$ through rule mining \cite{liu2022tlogic}. While higher-order rules could capture more complex patterns, they also introduce exponential computational complexity and risk of overfitting. Thus, we only use the length 1 rules as a practical measure. A detailed complexity analysis is provided in Appendix \ref{app:complexity}. A temporal rule is defined as follows:
\begin{align}
\label{temporal_rule}
(E_1, r_h, E_2, T_2) \leftarrow (E_1, r_b, E_2, T_1)
\end{align}

\noindent where $T_{1} < T_{2}$, $r_h$ and $r_b$ denote rule head and body relation, $E_i$ and $T_i$ indicate entity and timestamp variables.

For ease of understanding, we take the retrieval of the object entity as an example. We first mask the object entity in the target event to convert it into a query $f_q = (s, r, ?, t)$, and then use the historical KG sequences and temporal rules to search for historical events that support this query. For a rule $tr \in TR$, we ground the rule body in the historical KG sequences $\left\{\mathcal{G}_i\right\}_{i=t-w}^{t-1}$ to obtain the grounded historical events:
\begin{align}
\label{grounding_events}
\mathcal{F}_{f,tr}^{o}=\{(s,r_{b},o',t')|t-w\leq t'<t\}
\end{align}
\noindent where $r_{b}$ represents the relation of the rule body for $tr$, while $w$ controls the window of the historical KG sequences. The grounded events of all rule bodies for $tr$:
\begin{align}
\mathcal{F}_{f}^{o}=\bigcup_{tr \in TR} \mathcal{F}_{f,tr}^{o}
\end{align}
Then, we take the object entities in $\mathcal{F}_{f}^{o}$ as the candidate set $\mathcal{C}_{f}^{o} = \{o'|(s,r_{b},o',t') \in \mathcal{F}_{f}^{o} \}$ for object entity replacement. Further, peer object events $\mathcal{P}_{f}^{o} = \{(s,r,o',t)| o'\in \mathcal{C}_{f}^{o} \}$ of the target event $f$ can be generated by substituting the object entities $o$. The peer relation events $\mathcal{P}_{f}^{r}$ and the peer subject events $\mathcal{P}_{f}^{s}$ could be obtained similarly.

\paragraph{Expectation of Occurrence} Strikingness is related to human expectations regarding the occurrence of an event. To this end, we employ a rule-based approach to compute the expected scores of the target event and its peer events. For a peer event $f'$, we ground the instances by applying each rule $tr \in TR$ to obtain the rule grounding events pair:
\begin{align}
rg_{f'}^{tr}=(s,r,o',t_{h})\leftarrow(s,r_{b},o',t_{y})
\end{align}
Each $rg_{f'}^{tr}$ consists of a rule body and a rule head means the body event could support the occurrence of head event.

Furthermore, we also consider the impact of event frequency on strikingness measurement. Intuitively, the more frequently the rule grounding observed, the higher the expectation of event $f'$. Therefore, we iteratively collect rule grounding to obtain the set of rule grounding:
\begin{align}
\label{rule_grounding_set}
RG_{f'}^{tr}=\{(s,r,o',t_{h_i})\leftarrow(s,r_{b},o',t_{y_j})\}_{j=1}^{n},
\notag \\ 
with\ t_{y_1}<t_{h_1}\leq t_{y_2}<t_{h_2}\leq...\leq t_{y_n}<t_{h_n}
\end{align}

\noindent where $t_{y_1} \ge t-w$, and $n$ represents the number of rule groundings.  The temporal constraint is utilized to avoid overlap and redundancy. Additionally, we set $t_{h_n} = t$, indicating that in the temporally closest grounding, only the rule body is a historical event, which provides support for reasoning about the potential future event $f'$. 

After obtaining the rule grounding set, we compute the expectation score of target event and its peer events. Following the rule-based reasoning methods \cite{ott2023rule,TempValid}, we design the expectation score from two aspects. First, the effectiveness of rule-based reasoning will decay over time, and we use an exponential distribution to model this decay. Second, since different rules contribute variably to expectation, we employ the confidence of each rule to reflect its contribution. The expectation score is calculated as follows:
\begin{align}
\label{expectation_score}
sc_{f'}=\sum_{tr\in TR}\sum_{rg_{f'}^{tr}\in RG_{f'}^{tr}}conf(tr)*e^{-\lambda(t-t_y)}
\end{align}
where $conf(tr)$ represents the confidence of rule $tr$, $\lambda > 0$ is the temporal decay coefficient, and $t_y$ denotes the timestamp of the rule body in the rule grounding $rg_{f'}^{tr}$. 

According to the different elements being replaced, we obtain the corresponding score sets of the target event $f$ and transform them to vectors, denoted as $\mathbf{sc}_{f}^{s}\in \mathbb{R}^{|\mathcal{C}_{f}^{s}|}$, $\mathbf{sc}^{r}\in \mathbb{R}^{|\mathcal{C}_{f}^{r}|}$, and $\mathbf{sc}^{o}\in \mathbb{R}^{|\mathcal{C}_{f}^{o}|}$. These three sets of scores estimate the expectation of events from different perspectives. 

\paragraph{Strikingness Calculation} The expectation score reflects the prior perception of the likelihood of an event’s occurrence. Strikingness measures the degree to which the event exceeds the prior expectation and is thus inversely correlated with the expectation score. That is, the higher the expectation score assigned to the event $f'$, the less striking the occurrence of the event. To constrain strikingness within the range [0, 1], the obtained score sets need to be normalized as follows:
\begin{align}
\label{sc_norm}
\mathbf{sc}_{norm}^{be} = \mathbf{sc}_{f}^{be}/||\mathbf{sc}_{f}^{be}||_2
\end{align}
where body element $be \in \{ s,r,o\}$ is an replaced element, and $||\cdot||_2$ is L2 normalization.

Then, we compare the normalized scores of peer events $f'$ to highlight the prominence of the target event $f$. The strikingness scoring function accounts for both the magnitude of peer event scores and the differences between them. Based on the strikingness measure proposed in ~\cite{angiulli2009detecting}, we adopt the following function to calculate the strikingness of the body elements:

\begin{align}
\label{body_strikingness}
sk_{f}^{be}=\sum sc_{f'}^{be}*(sc_{f'}^{be}-sc_{f}^{be})*\mathbb{I}(sc_{f'}^{be}>sc_{f}^{be})
\end{align}

\noindent where $sc_{f'}^{be}\in \mathbf{sc}_{norm}^{be}$, and $\mathbb{I}(\cdot)$ is the indicator function that returns the value 1 if the condition is true and 0 otherwise.

Finally, we weight the strikingness of all body elements to obtain the final strikingness of potential future event $f$:
\begin{align}
\label{event_strikingness}
sk_{f}&=\alpha^{s}sk_{f}^{s}+\alpha^{o}sk_{f}^{o}+\alpha^{r}sk_{f}^{r}
\end{align}
where $\alpha^{s};\alpha^{o};\alpha^{r}\in [0,1]$ are weights of body elements and $\alpha^{s}+\alpha^{o}+\alpha^{r}=1$. We provide the proof for the bound $sk_f \in [0, 1]$ in Appendix \ref{app:proof}.

\subsection{Striking-aware Evaluation Framework}
Given a query $(s_q, r_q, ?, t_q)$, a TKGR could output a score vector $\mathbf{y} \in \mathbb{R}^{|\mathcal{E}|}$. Through ranking $\mathbf{y}$, the rank of the answer entity could be obtained. The original evaluation method calculates the Mean Reciprocal Ranking (MRR) and Hits@k based on ranks. However, the approach assigns equal weight to all future events, making it unable to capture the model's ability to predict outstanding events. To address this limitation, we propose a striking-aware evaluation framework to evaluate existing TKGR baselines. Specifically, the computed strikingness scores are used as weighting factors to calculate the Weighted MRR (WMRR) and Weighted Hits@k (WHits@k) metrics, as described below:

\begin{flalign}
{\rm WMRR} = \frac{\sum_{i=1}^{|N|} (s_i+b)*\frac{1}{rank_i}}{\sum_{i=1}^{|N|} (s_i+b)}
\end{flalign}
\begin{flalign}
{\rm WHits@k} = \frac{\sum_{i=1}^{|N|}(s_i+b)*\mathbb{I}(rank_i \leq k )}{{\sum_{i=1}^{|N|} (s_i+b)}}
\end{flalign}
where $|N|$ is the size of the test set and $s_i$ is the strikingness of the event, which ensures that high-strikingness events contribute more to the metric. Setting $b$ is equivalent to assigning a higher cost of mis-prediction to outstanding events in the evaluation. Specifically, with $b=0.1$, an event with strikingness $sk=1$ receives approximately ten times the weight of an event with $sk=0$ in the metric calculation, since $(1+b)/(0+b)=11$. This reflects our value judgment that the utility of correctly predicting a critical outstanding event far outweighs that of correctly predicting a routine trivial one. The parameter $b$ thus quantifies and incorporates this value judgment into the evaluation framework.

In addition, we introduce a simple ensemble method. Instead of constructing complex networks, we follow \cite{meilicke2021naive,liu2023simfy,wang2024large} to combine the output scores of the existing path- and representation-based methods straightforwardly to obtain the final score:
\begin{flalign}
\label{ensmble_form}
\mathbf{y}_{ensem} = \eta \mathbf{y}_{path}+(1-\eta)\mathbf{y}_{representation}
\end{flalign}
where $\eta \in [0,1]$ is a hyperparameter. We perform a hyperparameter search for the $\eta$ on the validation set, and subsequently apply it to the test set. \textbf{Our purpose in constructing the ensemble method is not to pursue higher performance. Throughout the paper, we focus on investigating the boundaries of TKGR models' reasoning capabilities.}

\section{Experiments}
\subsection{Implement Settings}
\paragraph{Datasets} Extensive experiments are conducted on four TKG datasets: ICEWS14, ICEWS18, ICEWS05-15, and GDELT. The datasets are divided in chronological order. 

\begin{figure}[!thb]
    \centering
    \subfigure{
        \begin{minipage}[t]{0.95\linewidth}
            \centering
            \includegraphics[width=1.0\textwidth]{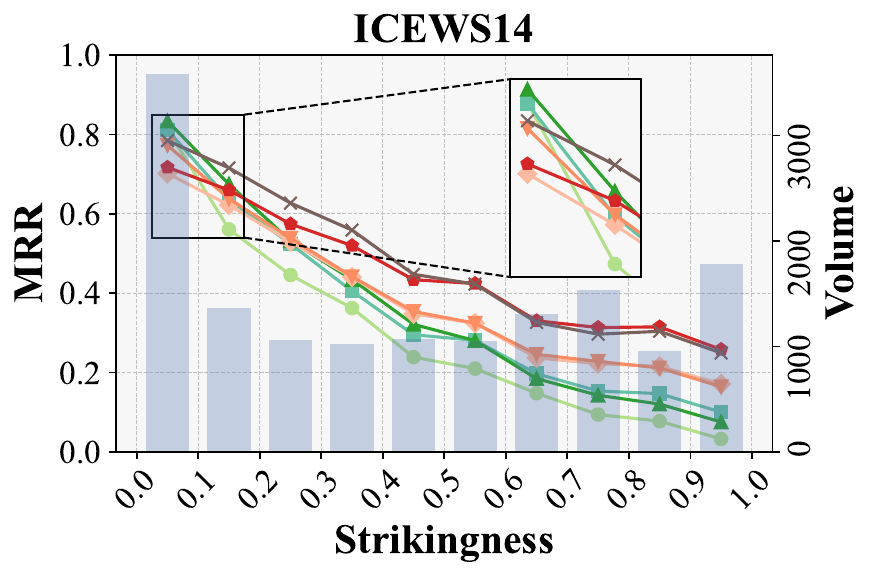}
        \end{minipage}
    }
    \subfigure{
        \begin{minipage}[t]{0.95\linewidth}
            \centering
            \includegraphics[width=1.0\textwidth]{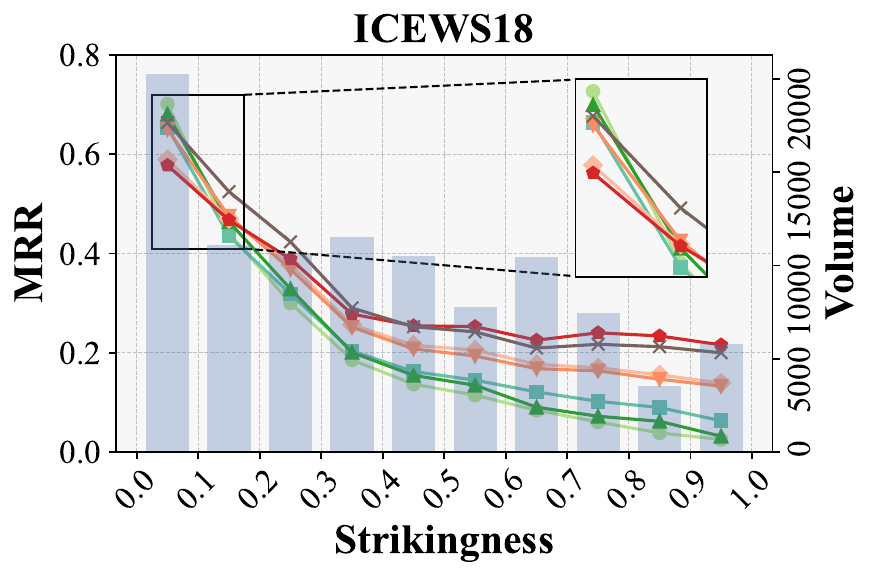}
        \end{minipage}
    }
    \subfigure{
        \begin{minipage}[t]{0.95\linewidth}
            \centering
            \includegraphics[width=1.0\textwidth]{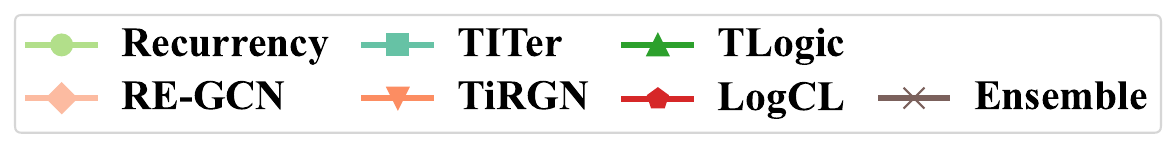}
        \end{minipage}
    }
    \caption{Group performances on ICEWS14 and ICEWS18. In each group, the bars denote the number of test events, while the lines indicate the average performance.}
    \label{Group_results}
\end{figure}

\paragraph{Baselines} We conducted comparisons under a unified experimental framework, focusing on reproducible approaches, including path-based methods: Recurrency \cite{Recurrency}, TITer \cite{TITer}, and TLogic \cite{liu2022tlogic}, representation-based methods: RE-GCN \cite{li2021temporal}, TiRGN \cite{TiRGN}, and LogCL \cite{LogCL}, and LLM-based methods: ICL \cite{ICL} and GenTKG \cite{liao2024gentkg}. We follow the time-aware filtering settings described in \cite{TKGFEvaluation}\footnote[1]{\href{https://github.com/nec-research/TKG-Forecasting-Evaluation}{https://github.com/nec-research/TKG-Forecasting-Evaluation}}. The code and strikingness weights are available\footnote[2]{\href{https://github.com/PersimmonZ1/RSMF}{https://github.com/PersimmonZ1/RSMF}}.

\paragraph{Hyperparameters} We set $\lambda$ as 0.1 following TLogic~\cite{liu2022tlogic}. And $\alpha^s$, $\alpha^o$, $\alpha^r$ are set to 0.4, 0.4, 0.2. The recommended parameters setting is provided to establish a well-calibrated evaluation framework to facilitate reproducibility and promote community adoption. More implementation details are provided in the Appendix \ref{appendix: implement details}.

\subsection{TKGR Performance on Group Strikingness}
\textbf{Performance Across Groups} Distinct from previous works that only report overall metrics for model performance, we group the data based on strikingness and compute the average performance within each group. Figure \ref{Group_results} presents the grouped MRR results for six baseline models and our proposed ensemble model on the ICEWS14. It could be observed that the volume of events decreases as strikingness increases. That is, low-strikingness events dominate the test set, whereas high-strikingness events are scarce, which aligns with human intuition. On performance, all models exhibit a decline with increasing strikingness, indicating that events with higher strikingness are more difficult to predict.

In overall trends, we find that path-based methods excel at predicting events with low strikingness, while representation-based methods demonstrate superior performance on high-strikingness events. This indicates that different categories of methods have distinct advantages in forecasting future events with different levels of strikingness. Based on this finding, we attempt to investigate whether a simple ensemble method can combine the advantages of both methods. Results on ICEWS0515 and GDELT exhibit consistent conclusions. Due to space limitations, they are shown in Appendix \ref{appendix:group strikingness}.

For the ensemble method, we observe two phenomena: \textbf{1) Trade-off}: For events with $sk < 0.1$ or $sk > 0.5$, the performance of the ensemble method lies between those of the individual methods. \textbf{2) Enhancement}: For events with $ 0.1 < sk < 0.5$, the ensemble method outperforms both individual methods. Based on these observations, it can be inferred that some models that leverage the ensemble method mainly improve the performance for low-strikingness events, which are predominant. For instance, comparing RE-GCN and TiRGN, which incorporates additional repeated events information, TiRGN significantly outperforms RE-GCN on low-strikingness events, while achieving similar performance on high-strikingness events. 

\begin{table}[!h]
    \centering
    \resizebox{0.49\textwidth}{!}{
    \begin{tabular}{c|c|cccc}
    \toprule
    \multirow{2.5}[0]*{\textbf{Model}} & \multirow{2.5}[0]*{\textbf{Type}} &\multicolumn{4}{c}{\textbf{ICEWS14}} \\
    \cmidrule(lr){3-6} 
    ~&~ &$S(0.6,0.7)$ &  $S(0.7,0.8)$ &  $S(0.8,0.9)$& $S(0.9,1.0)$ \\
    \midrule 
    \multirow{2}*{Recurrency} &High $NO_f$  &19.10	&13.46	&12.05	&5.23	\\
    ~&Low $NO_f$ &12.28	&5.77	&5.36	&1.65	\\
    \midrule
    \multirow{2}*{TITer}&High $NO_f$ &27.61	&21.92	&22.89	&16.67	 \\
    ~&Low $NO_f$ &17.51	&11.41	&9.13	&7.44	\\
    \midrule
    \multirow{2}*{TLogic}&High $NO_f$ &28.96	&21.28	&19.08	&14.60	\\
    ~&Low $NO_f$ &15.57	&10.90	&8.73	&4.55	 \\
    \midrule 
    \multirow{2}*{RE-GCN} &High $NO_f$  &38.36	&31.03	&30.92	&24.57	\\
    ~&Low $NO_f$ &17.81	&19.23	&17.86	&14.98	 \\
    \midrule
    \multirow{2}*{TiRGN}&High $NO_f$ &35.97	&31.15	&29.12	&21.78	 \\
    ~&Low $NO_f$ &19.31	&20.26	&18.25	&14.88	 \\
    \midrule
    \multirow{2}*{LogCL}&High $NO_f$ &49.40	&42.56	&43.98	&33.45	 \\
    ~&Low $NO_f$ &27.10	&28.46	&29.56	&25.83	\\
    \midrule
    \multirow{2}*{Ensemble}&High $NO_f$ &49.70	&40.64	&42.77	&32.60	 \\
    ~&Low $NO_f$ &26.65	&25.90	&28.37	&24.69	\\
    \bottomrule
    \end{tabular}
    }
    \caption{The Hits@3 metric of High and Low $NO_f$ events within the high-strikingness groups on ICEWS14.}
    \label{table:NO}
\end{table}

\begin{table*}[!thb]
\centering
\renewcommand{\arraystretch}{0.8}
\resizebox{\textwidth}{!}{
\begin{tabular}{l|l|ccc|ccc|ccc|ccc}
\toprule
\multirow{2}{*}{\textbf{Dataset}}  & \multirow{2}{*}{\textbf{Model}} & \multicolumn{3}{c|}{\textbf{(W)MRR}}   & \multicolumn{3}{c|}{\textbf{(W)Hits@1}}   & \multicolumn{3}{c|}{\textbf{(W)Hits@3}} & \multicolumn{3}{c}{\textbf{(W)Hits@10}}     \\
\cmidrule(lr){3-14} 
&& ORG $\uparrow$ & SK $\uparrow$& $\Delta$ $\downarrow$& ORG $\uparrow$& SK $\uparrow$& $\Delta$ $\downarrow$& ORG $\uparrow$& SK $\uparrow$& $\Delta$ $\downarrow$& ORG $\uparrow$& SK $\uparrow$& $\Delta$ $\downarrow$\\ 
\midrule
\multirow{9}{*}{\textbf{ICEWS14}}
& ICL &-&-&-& 32.40 & 15.95 & 50.77 \% & 45.94 & 26.18 & 43.01 \% & 56.59 & 36.59 & 35.34 \% \\
& GenTKG &-&-&-& 37.04 & 18.28 & 50.65 \% & 48.43 & 28.07 & 42.04 \% & 53.62 & 33.96 & 36.67 \% \\
\cmidrule(lr){2-14} 
& Recurrency & 37.12 & 19.47 & 47.55 \% & 29.69 & 13.62 & 54.13 \% & 40.75 & 21.49 & 47.26 \% & 51.26 & 30.75 & 40.01 \% \\
& TITer & 41.87 & 25.46 & 39.19 \% & 32.97 & 17.55 & 46.77 \% & 46.45 & 28.39 & 38.88 \% & 58.31 & 40.72 & 30.17 \% \\
& TLogic & 42.52 & 24.92 & 41.39 \% & 33.19 & 16.68 & 49.74 \% & 47.63 & 28.53 & 40.10 \% & 60.27 & 41.41 & 31.29 \% \\
\cmidrule(lr){2-14} 
& RE-GCN & 42.43 & 29.92 & 29.48 \% & 31.90 & 19.76 & 38.06 \% & 47.59 & 33.86 & 28.85 \% & 62.74 & 49.90 & 20.47 \% \\
& TiRGN & 44.45 & 30.43 & 31.54 \% & 33.77 & 20.30 & 39.89 \% & 49.57 & 34.09 & 31.23 \% & 64.89 & 50.69 & 21.88 \% \\
& LogCL & \underline{48.84} & \underline{38.17} & \textbf{21.85 \%} & \underline{37.76} & \underline{26.88} & \textbf{28.81 \%} & \underline{54.60} & \underline{43.23} & \textbf{20.82 \%} & \underline{70.43} & \underline{60.78} & \textbf{13.70 \%} \\
\cmidrule(lr){2-14} 
& Ensemble & \textbf{51.35} & \textbf{38.67} & \underline{24.69} \% & \textbf{40.23} & \textbf{27.18} & \underline{32.44} \% & \textbf{57.60} & \textbf{44.07} & \underline{23.49} \% & \textbf{72.56} & \textbf{61.49} & \underline{15.26} \% \\
\midrule
\multirow{9}{*}{\textbf{ICEWS18}}
& ICL & -& -& - & 19.27 & 9.42 & 51.12 \% & 31.35 & 17.87 & 43.0 \% & 43.97 & 28.58 & 35.00 \% \\
& GenTKG & -& -& -& 21.36 & 11.04 & 48.31 \% & 33.51 & 20.35 & 39.27 \% & 40.03 & 26.68 & 33.35 \% \\
\cmidrule(lr){2-14} 
& Recurrency & 28.66 & 15.97 & 44.28 \% & 20.77 & 10.22 & 50.79 \% & 32.25 & 18.05 & 44.03 \% & 43.54 & 26.83 & 38.38 \% \\
& TITer & 29.65 & 18.60 & 37.27 \% & 21.58 & 12.10 & 43.93 \% & 33.06 & 20.46 & 38.11 \% & 44.98 & 31.32 & 30.37 \% \\
& TLogic & 29.59 & 17.30 & 41.53 \% & 20.42 & 10.21 & 50.00 \% & 33.60 & 19.61 & 41.64 \% & 48.06 & 32.05 & 33.31 \% \\
\cmidrule(lr){2-14} 
& RE-GCN & 32.78 & 23.96 & 26.91 \% & 22.54 & 14.87 & 34.03 \% & 36.91 & 26.83 & 27.31 \% & 52.74 & 42.03 & 20.31 \% \\
& TiRGN & 33.54 & 23.59 & 29.67 \% & 22.92 & 14.21 & 38.00 \% & 38.09 & 26.67 & 29.98 \% & 54.38 & 42.25 & 22.31 \% \\
& LogCL & \underline{35.43} & \underline{28.35} & \textbf{19.98 \%} & \underline{24.09} & \textbf{17.79} & \textbf{26.15 \%} & \underline{40.22} & \underline{32.11} & \textbf{20.16 \%} & \underline{58.04} & \underline{49.66} & \textbf{14.44 \%} \\
\cmidrule(lr){2-14} 
& Ensemble & \textbf{37.74} & \textbf{28.49} & \underline{24.51} \% & \textbf{26.19} & \underline{17.77} & \underline{32.15} \% & \textbf{42.84} & \textbf{32.31} & \underline{24.58} \% & \textbf{60.70} & \textbf{50.15} & \underline{17.38} \% \\
\midrule
\multirow{7}{*}{\textbf{ICEWS0515}}
& Recurrency & 44.39 & 26.66 & 39.94 \% & 35.68 & 18.89 & 47.06 \% & 49.26 & 30.05 & 39.00 \% & 60.54 & 41.61 & 31.27 \% \\
& TITer & 48.03 & 31.39 & 34.65 \% & 38.61 & 22.43 & 41.91 \% & 53.03 & 34.85 & 34.28 \% & 65.42 & 48.81 & 25.39 \% \\
& TLogic & 46.56 & 30.65 & 34.17 \% & 35.48 & 20.50 & 42.22 \% & 53.27 & 35.65 & 33.08 \% & 67.25 & 50.69 & 24.62 \% \\
\cmidrule(lr){2-14} 
& RE-GCN & 47.93 & 33.73 & 29.63 \% & 37.32 & 23.39 & 37.33 \% & 53.78 & 38.33 & 28.73 \% & 68.16 & 54.23 & 20.44 \% \\
& TiRGN & 49.90 & 35.01 & 29.84 \% & 38.95 & 24.12 & 38.07 \% & 56.13 & 40.08 & 28.59 \% & 70.69 & 56.52 & 20.05 \% \\
& LogCL & \underline{56.95} & \underline{45.39} & \textbf{20.30 \%} & \underline{45.88} & \underline{33.62} & \textbf{26.72 \%} & \underline{63.73} & \underline{51.75} & \textbf{18.80 \%} & \underline{77.79} & \underline{68.38} & \textbf{12.10 \%} \\
\cmidrule(lr){2-14} 
& Ensemble & \textbf{58.53} & \textbf{46.10} & \underline{21.24} \% & \textbf{47.48} & \textbf{34.17} & \underline{28.03} \% & \textbf{65.43} & \textbf{52.61} & \underline{19.59} \% & \textbf{79.23} & \textbf{69.41} & \underline{12.39} \% \\
\midrule
\multirow{7}{*}{\textbf{GDELT}} 
& Recurrency & \underline{24.37} & 16.53 & 32.17 \% & \textbf{16.43} & 9.96 & 39.38 \% & \underline{26.79} & 17.98 & 32.89 \% & 39.70 & 29.10 & 26.70 \% \\
& TITer & 20.17 & 13.16 & 34.75 \% & 14.23 & 8.18 & 42.52 \% & 21.98 & 14.06 & 36.03 \% & 30.67 & 22.07 & 28.04 \% \\
& TLogic & 19.77 & 12.44 & 37.08 \% & 12.23 & 6.53 & 46.61 \% & 21.67 & 13.43 & 38.02 \% & 35.62 & 25.01 & 29.79 \% \\
\cmidrule(lr){2-14} 
& RE-GCN & 19.73 & 14.39 & 27.07 \% & 12.50 & 7.85 & 37.20 \% & 20.96 & 15.21 & 27.43 \% & 33.89 & 27.10 & 20.04 \% \\
& TiRGN & 21.25 & 15.41 & 27.48 \% & 13.27 & 8.38 & 36.85 \% & 22.81 & 16.38 & 28.19 \% & 37.01 & 29.20 & 21.10 \% \\
& LogCL & 23.74 & \underline{19.47} & \textbf{17.99 \%} & 14.62 & \textbf{10.79} & \textbf{26.20 \%} & 25.57 & \underline{20.88} & \textbf{18.34 \%} & \underline{42.33} & \underline{37.14} & \textbf{12.26 \%} \\
\cmidrule(lr){2-14} 
& Ensemble & \textbf{25.26} & \textbf{19.69} & \underline{22.05} \% & \underline{15.60} & \underline{10.71} & \underline{31.35} \% & \textbf{27.58} & \textbf{21.37} & \underline{22.52} \% & \textbf{45.03} & \textbf{38.00} & \underline{15.61} \% \\
\bottomrule
\end{tabular}
 }
\caption{Performance comparison of original (`ORG') and striking-aware (`SK') evaluation, the higher value means better performance ($\uparrow$). $\Delta$ represents the relative performance decrease across two evaluation settings, and the smaller value indicates that the model is less affected by repetitive bias ($\downarrow$). The best results are bolded, and the second-best results are underlined.}
\label{main performance}
\end{table*}

\begin{figure}[t]
    \centering
    \includegraphics[width=0.95\linewidth]{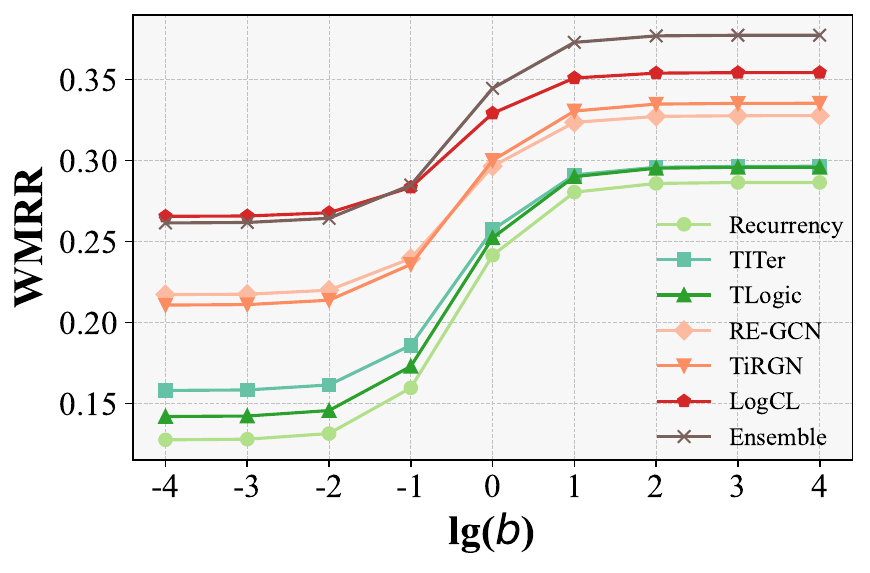}
    \caption{WMRR with different bias $b$ on ICEWS18.}
    \label{select_b}
\end{figure}
\paragraph{Predictability} To better understand the predictability of high-strikingness events, we introduce Neighborhood Overlap ($NO_f$), a structural metric that quantifies the richness of historical interactions between the subject and object entities. As shown in Table \ref{table:NO}, within each high-strikingness interval, events with higher $NO_f$ (richer historical evidence) are consistently more predictable than those with lower $NO_f$. This confirms that even among outstanding events, those supported by sufficient historical evidence remain learnable. The findings validate that our strikingness measure aligns not only with event rarity but also with predictive difficulty rooted in evidence scarcity. The definition of $NO_f$ and more analysis are provided in the Appendix \ref{appendix:Study of Event Predictability}.

\subsection{Strikingness-Aware Evaluation for TKGR}
To mitigate the low-strikingness bias in the dataset and more comprehensively evaluate the ability of TKGR models to forecast future events, we propose a striking-aware evaluation framework. We evaluate existing models using weighted MRR and weighted Hits@k. The hyperparameter $b$ determines the extent to which the evaluation results emphasize the model's ability to predict outstanding events. By adjusting $b$, the framework can place more or less weight on events, thereby controlling the balance performance between overall events and outstanding events. Figure \ref{select_b} shows the WMRR of the models on ICEWS18 under different $b$. A small $b$ indicates that WMRR focuses on the model's ability to predict outstanding events. When the b becomes very large, i.e., $b \ge 100$, the results of WMRR are close to the original MRR. Additionally, the comparison of model performance reverses as the value of $b$ changes. When $b > 0.1$, Ensemble outperforms LogCL, and TiRGN outperforms RE-GCN. However, when $b < 0.1$, the results change such that LogCL surpasses Ensemble, and RE-GCN outperforms TiRGN. This demonstrates that LogCL has a superior ability to predict outstanding events compared to the Ensemble method, and a similar conclusion holds for RE-GCN and TiRGN. 

\textbf{We empirically select $\boldsymbol{b = 0.1}$ to provide a unified evaluation result and facilitate fair comparison. This setting balances the metric's emphasis between the model's ability to reason events with varying levels of strikingness.} Table \ref{main performance} shows the experimental results of the models under both the original and the striking-aware evaluation frameworks. Obviously, the absolute values of the striking-aware metrics are significantly lower than those of the original evaluation framework, which aligns more closely with the recognized challenges of the future event forecasting task. Nevertheless, it is important to emphasize that our motivation is not to lower the scores purposely. Instead, we aim to reveal the models’ comprehensive predictive capabilities through the differences between the original and striking-aware metrics. Furthermore, adjustments to other hyperparameters have a negligible impact on model ranking evaluations, demonstrating the robustness of the strikingness-aware evaluation framework. Further details are provided in the Appendix \ref{appendix: Analysis Sensitivity}.

As shown, path-based methods exhibit a reduction of over 30\% across all datasets, with the heuristic baseline Recurrency demonstrating a particularly striking decrease of 50\%. In contrast, methods based on evolutionary representations experience a notably smaller decline, with all reductions remaining below 30\%. Among them, LogCL achieves the most robust performance, with its reduction consistently maintained at around 20\%. This indicates that the existing path-based methods have not demonstrated the claimed multi-hop reasoning on TKGs. Methods based on LLMs exhibit similar behavior to path-based approaches because the context window size constrains their reasoning, typically limited to first-order neighborhood information as input.

For the ensemble method, it still achieves state-of-the-art performance under the striking-aware metrics. However, the improvement is limited compared to the original metrics. This is because the ensemble approach primarily enhances the model's predictive performance on events with low strikingness, whereas the striking-aware evaluation framework focuses on the model's ability to predict outstanding events. By restricting the improvements from low-strikingness events, the striking-aware evaluation allows researchers to more objectively assess progress in the field. The results of other ensemble combinations are reported in Appendix \ref{appendix:Ensemble Combinations}.

\subsection{Outstanding Event Mining}
Through quantifying the strikingness of future events and extracting the top ones, we can mine outstanding events. Events in ICEWS18 with different strikingness as cases are showed in Table \ref{case_overall}. It is obvious that events with high strikingness are more likely to capture people's attention and may have a significant impact on the future.

\begin{table}[t]
    \small
    \centering
    \scalebox{1.0}{
    \begin{tabular}{l|l}
        \toprule
        Events & $sk$ \\
        \midrule
        $\text{Commando (Kosovo)}\xrightarrow[\text{2018-10-04}]{\text{Occupy territory}}\text{Serbia}$ & 1.0 \\
        $\text{Taliban}\xrightarrow[\text{2018-9-28}]{\text{Threaten with military force}}\text{Military (Afghanistan)}$ & 0.819 \\
        $\text{Buhari}\xrightarrow[\text{2018-10-12}]{\text{Mobilize or increase armed forces }}\text{Nigeria}$ & 0.704 \\
        $\text{Malaysia}\xrightarrow[\text{2018-10-15}]{\text{Sign agreement}}\text{Hong Kong}$ & 0.682 \\
        $\text{Russia}\xrightarrow[\text{2018-9-29}]{\text{Engage in material cooperation}}\text{China}$ & 0.575 \\
        $\text{Moon Jae-in}\xrightarrow[\text{2018-10-09}]{\text{intent to negotiate}}\text{Italy}$ & 0.349 \\
        $\text{India}\xrightarrow[\text{2018-10-05}]{\text{Make statement}}\text{Russia}$ & 0.103 \\
        $\text{France}\xrightarrow[\text{2018-10-27}]{\text{Consult}}\text{Germany}$ & 0.005 \\ 
        \bottomrule 
    \end{tabular}
    }
     \caption{Case events with different strikingness. }
    \label{case_overall}
\end{table}

To validate the effectiveness of the proposed RSMF in measuring event strikingness, we conducted a human evaluation study involving six volunteers. The six volunteers are all graduate students (Master's or Ph.D. candidates) in artificial intelligence, with research backgrounds spanning temporal knowledge graphs, knowledge graphs, relation extraction, and named entity recognition. Specifically, we randomly sampled 3000 events and a peer event for each target event, providing the contextual information for all events. The volunteers were asked to evaluate which event in each pair exhibited higher strikingness based on the given context.
\begin{table}[h]
    \small
    \centering
    \begin{tabular}{cccccc|c}
        \toprule
        H1 &H2  &H3 &H4 & H5& H6 & Average     \\
        \midrule
        0.683 &0.726  &0.698 &0.667 & 0.703& 0.696 & 0.696 \\
        \bottomrule 
    \end{tabular}
    \caption{Cohen's Kappa between humans and RSMF. }
    \label{human_evalutation}
\end{table}

Table \ref{human_evalutation} reports the Cohen's Kappa coefficients between the evaluations of individual annotators and the proposed RSMF. Cohen's Kappa is a widely used measure for inter-rater agreement, with values ranging from -1 to 1, where -1 denotes ``less than chance agreement" and 1 represents ``almost perfect agreement." As shown in Table \ref{human_evalutation}, the average Cohen's Kappa coefficient between RSMF and human evaluators is 0.696, indicating that RSMF achieves ``substantial agreement'' with human evaluators on the strikingness of events. 

We also conduct analysis experiments to validate four aspects of outstanding events: Novelty, Rarity, Context Dependence, and Time Sensitivity. The characteristics analysis is provided in Appendix \ref{appendix: Analysis Sensitivity}.

\section{Conclusion}
We observe that the current TKGR evaluation overweights trivial repetitive events, overshadowing models' ability to predict rare yet meaningful ones. To rectify this, we introduce a strikingness-aware evaluation framework that quantifies event strikingness through rule-based peer-event comparison  and incorporates it as a dynamic weight into ranking metrics.
Experiments on four benchmarks demonstrate: 1) a consistent performance drop as event strikingness increases, 2) a clear divide between path-based methods (strong on low-strikingness events) and representation-based methods (superior on high-strikingness events), and 3) we design a simple ensemble method and find it mainly improve prediction of repetitive events, with limited gains on rare, striking events.
Our framework recenters evaluation on the forecasting of outstanding events, offering a rigorous and meaningful benchmark for TKGR, and calls for future work to prioritize reasoning beyond repetition.

\section*{Acknowledgements}
This work was supported in part by the National Natural Science Foundation of China under Grant No. 62276110 and in part by the fund of Joint Laboratory of HUST and Pingan Property \& Casualty Research (HPL). The authors would also like to thank the anonymous reviewers for their comments on improving the quality of this paper.

\bibliographystyle{named}
\bibliography{ijcai26,reference}

\clearpage
\appendix
\section{Proof of Strikingness Boundedness under L2 Normalization}
\label{app:proof}

\begin{lemma}[Boundedness of $sk_f^{be}$ and $sk_f$]
\label{lem:boundedness}
Given the L2-normalized score vector $\mathbf{v} = \frac{\mathbf{sc}^{be}_{f}}{\|\mathbf{sc}^{be}_{f}\|_2}$ with all $v_i \geq 0$ and $\|\mathbf{v}\|_2 = 1$, the element-wise strikingness $sk_f^{be}$ defined in Eq.~(8) satisfies
\[
0 \leq sk_f^{be} \leq 1.
\]
Consequently, the overall strikingness $sk_f$ defined in Eq.~(9) also lies in $[0,1]$:
\[
0 \leq sk_f \leq 1.
\]
\end{lemma}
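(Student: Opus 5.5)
Write $c = v_f$ for the normalized score of the target event, and $v_i$ for the normalized scores of the peer events. Eq.~(8) then reads
\[
sk_f^{be}=\sum_i v_i (v_i - c)\,\mathbb{I}(v_i>c).
\]
The plan has two parts. First, bound each summand between $0$ and $v_i^2$. Second, sum these bounds and use the unit-norm condition $\|\mathbf{v}\|_2=1$. The statement about $sk_f$ then follows from a convex-combination argument.

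\emph{Lower bound.} Each summand is either $0$, when the indicator vanishes, or the product $v_i(v_i-c)$ with $v_i>c$. In the latter case $v_i>c\ge 0$, so both factors are strictly positive and the product is positive. Every term is therefore nonnegative, which gives $sk_f^{be}\ge 0$.

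\emph{Upper bound.} When the indicator is active, $0 \le v_i - c \le v_i$ because $c\ge 0$. Hence each summand satisfies $v_i(v_i-c)\le v_i^2$, and inactive summands contribute $0\le v_i^2$. Summing over the peer events gives $sk_f^{be}\le \sum_i v_i^2$. The peer events and the target are all entries of the same normalized vector $\mathbf{v}$, so this sum is at most $\|\mathbf{v}\|_2^2=1$.

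\emph{Degenerate case.} If $\mathbf{sc}_f^{be}=\mathbf{0}$, the normalization is undefined. I would set $sk_f^{be}=0$ by convention, or note that the statement already presupposes $\|\mathbf{v}\|_2=1$ and so excludes this case.

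\emph{Combined score.} Eq.~(9) gives $sk_f=\alpha^s sk_f^s+\alpha^o sk_f^o+\alpha^r sk_f^r$. The weights satisfy $\alpha^{be}\in[0,1]$ and $\alpha^s+\alpha^o+\alpha^r=1$, so $sk_f$ is a convex combination of three numbers in $[0,1]$. Such a combination lies in $[0,1]$, since $0\le sk_f\le \sum_{be}\alpha^{be}\cdot 1=1$.

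The step needing the most care is identifying which entries the sum in Eq.~(8) ranges over, because the upper bound relies on every $v_i$ there being an entry of the single unit vector $\mathbf{v}$. The target's own entry never contributes, since its indicator $\mathbb{I}(c>c)$ is $0$. Whether or not the target is included among the coordinates of $\mathbf{v}$, the sum of $v_i^2$ over contributing peers is therefore bounded by $\|\mathbf{v}\|_2^2=1$.
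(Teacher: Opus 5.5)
Your proof is correct and matches the paper's argument step for step: each term is nonnegative, $v_i(v_i-v_f)\le v_i^2$ follows from $v_f\ge 0$ and is summed against $\|\mathbf{v}\|_2^2=1$, and a convex combination gives the bound on $sk_f$. Your remark on the zero-vector case agrees with the paper's convention, stated in its implementation details, of setting $\mathbf{sc}^{be}_{norm}=\mathbf{0}$ so that $sk^{be}_f=0$.
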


\begin{proof}
We prove the two inequalities separately.

\paragraph{1. Non-negativity of $sk_f^{be}$.}
By definition,
\[
sk_f^{be} = \sum_i v_i (v_i - v_f) \cdot \mathbb{I}(v_i > v_f),
\]
where $v_f$ is the score of the target event. Since $v_i \geq 0$ and the indicator function $\mathbb{I}(v_i > v_f)$ ensures that only terms with $v_i > v_f$ are included, each term $v_i(v_i - v_f)$ is non-negative. Hence $sk_f^{be} \geq 0$.

\paragraph{2. Upper bound of $sk_f^{be}$.}
Let $S = \{ i \mid v_i > v_f \}$. Then
\[
sk_f^{be} = \sum_{i \in S} v_i (v_i - v_f).
\]
Because $v_f \geq 0$, we have $v_i - v_f \leq v_i$ for all $i \in S$, and therefore
\[
v_i (v_i - v_f) \leq v_i^2.
\]
Summing over $i \in S$ yields
\[
sk_f^{be} \leq \sum_{i \in S} v_i^2.
\]
Since $\mathbf{v}$ is a unit vector in the L2 sense and all its components are non-negative,
\[
\sum_{i} v_i^2 = 1 \quad \text{and} \quad \sum_{i \in S} v_i^2 \leq 1.
\]
Thus $sk_f^{be} \leq 1$.

\paragraph{3. Boundedness of $sk_f$.}
Recall that
\[
sk_f = \alpha^s sk_f^s + \alpha^o sk_f^o + \alpha^r sk_f^r,
\]
where $\alpha^s, \alpha^o, \alpha^r \in [0,1]$ and $\alpha^s + \alpha^o + \alpha^r = 1$.
Since each $sk_f^{be} \in [0,1]$, we have
\[
0 \leq sk_f \leq \alpha^s \cdot 1 + \alpha^o \cdot 1 + \alpha^r \cdot 1 = 1.
\]
This completes the proof.
\end{proof}

\section{Experimental Setup}
\label{appendix: implement details}
\subsection{Details of Datasets}
The ICEWS datasets provide time-stamped political and socio-economic events curated from real-world interactions, while GDELT records a broad range of global events with fine-grained temporal annotations. For the TKGR task, the training, validation, and test sets are divided strictly in chronological order. It is worth mentioning that datasets like YAGO and WIKI are not used because they transform time-spanning facts into instantaneous ones, which does not align with our focus on event-centric temporal knowledge graphs.
\begin{table}[h]
    \caption{Details of the TKG datasets.}
    \setlength\tabcolsep{3pt}
    \renewcommand{\arraystretch}{1.1}
    \centering
    \scalebox{0.95}{
        \begin{tabular}{lllll}
            \toprule
            Dataset &ICEWS14  &ICEWS18 &ICEWS05-15 & GDELT     \\
            \midrule
            Entities &6,869  &23,033 &10,094 & 7,691  \\
            Relations & 230 & 256& 251 & 240\\
            Train & 74,845 & 373,018& 368,868& 1,734,399\\
            Valid & 8,514  &45,995 &46,302 &238,765\\
            Test & 7,371 &49,545 &46,159 &305,241\\
            Granularity & 24 hours &24 hours &24 hours & 15 mins\\
            \bottomrule
        \end{tabular}
        \label{datasets}
    }
\end{table}

\subsection{Implementation Details}
Our models were implemented using Python 3.10 and the PyTorch 1.13.1 framework. All experiments were conducted on a server equipped with an NVIDIA 3090TI GPU with 24GB memory, an Intel(R) i9-12900K CPU, and 256GB of RAM. The software environment includes CUDA 11.6 and cuDNN 8.4. For the ensemble method, we perform a grid search for the hyperparameter $\eta$ over the range $[0, 1]$ with a step size of $0.1$, and determine its optimal value by evaluating performance on the validation set of each dataset. We employ the rule mining framework from TLogic \cite{liu2022tlogic} to learn and extract temporal rules, with the distinction that we focus exclusively on rules of length 1. We confine the learning of rules, their confidence scores, and the calculation of strikingness strictly to the training set before each query timestamp, thereby preventing any test data leakage.

For a target event, $sc_f$ is defined as 0 when its set of peer events is empty. This may appear to assign high strikingness to a target event, which can be justified as follows: 1) If other peer events exist, a high strikingness score is reasonable, as the target event indeed fits the description of being unexpected. 2) If the peer event set for the target is empty, then according to Equation \ref{body_strikingness}, its $sk^{be}$ is calculated as 0. This ensures that events truly lacking evidential support are not incorrectly identified as high-strikingness. Moreover, in the ICEWS14 test set, fewer than 15 events (less than 0.2\%) have completely empty peer sets (subject, object, and relation). We argue that this negligible proportion of special cases does not affect the validity of our evaluation conclusions. 

In addition, if no historical rule supports any candidate replacement for the body element (i.e., \(\mathcal{C}^{be}_f = \emptyset\)), we define $\mathbf{sc}^{be}_{norm} = \mathbf{0}$. Consequently, if the normalized vector is zero, then \(sk^{be}_f = 0\) by Equation \ref{body_strikingness}, and the overall strikingness \(sk_f\) remains bounded in \([0,1]\).

\subsection{Original Metrics}
Given a test event $f=(s, r, o, t)$, its rank of score is computed by corrupting candidate entities. Specifically, the object entity would be replaced by another candidate $e_c$, and the candidate event $f_c = (s, r, c, t)$ would be scored. By adding the reverse events, i.e., $f'=(o, r^{-1}, s, t)$ and replacing $s$ by $c$, the subject entity prediction could be achieved. The two metrics are widely used to evaluate model performance:

\noindent{\textbf{Mean Reciprocal Rank (MRR)}} evaluates ranking quality by averaging the inverse rank of the correct result: 
\begin{flalign}
MRR = \frac{1}{2*|\mathcal{F}_{test}|}\sum_{f \in \mathcal{F}_{test}} (\frac{1}{rank_{f}}+\frac{1}{rank_{f'}}) 
\nonumber
\end{flalign}
\noindent{\textbf{Hits@\textit{k} (H@\textit{k})}}  measures the proportion of queries where the correct result appears in the top-k positions:
\begin{flalign}
Hits@k = \frac{1}{2*|\mathcal{F}_{test}|}\sum_{f \in \mathcal{F}_{test}}\mathbb{I}\{rank_{f} \leq k \} + \mathbb{I}\{rank_{f'} \leq k\}
\nonumber
\end{flalign}
where $\mathbb{I}\{True\}=1$ and $\mathbb{I}\{False\}=0$.

\subsection{Baselines}
In this paper, we conducted experiments with three categories of baselines: path-based, representation-based, and LLM-based methods. Previously, inconsistent experimental setups and dataset usage led to unfair comparisons. Therefore, instead of indiscriminately evaluating recent methods, we conducted comparisons under a unified experimental framework, focusing on community-recognized and reproducible approaches\footnote[1]{\href{https://github.com/nec-research/TKG-Forecasting-Evaluation}{https://github.com/nec-research/TKG-Forecasting-Evaluation}}. Additionally, we evaluate the recent state-of-the-art methods, such as LogCL and LLM-based methods, to ensure our study reflects the latest advances in the field. As LLM-based methods directly output the top 100 candidate answers without assigning explicit scores to each candidate, only Hits@k can be computed, whereas MRR is not applicable. Therefore, we did not report the performance of LLM-based methods in the grouped strikingness performance analysis. Moreover, due to the high computational cost of LLM inference, their application to large-scale datasets such as ICEWS05-15 and GDELT remains impractical in Table \ref{main performance}. The details of baselines are as follows:

\subsubsection{Path-based}
\paragraph{Recurrency} Recurrency assigns scores to candidates by calculating the recency and frequency of events related to the query, requiring only two hyperparameters to be searched in order to achieve baseline performance.
\paragraph{TLogic} TLogic generates answers by learning and applying rules to observed events before the query timestamp and scores the answer candidates relying on the rules’ confidences and time differences.
\paragraph{TITer} TITer treats the historical TKG as the environment and the historical domain as the action space. Starting from the subject, it employs reinforcement learning to navigate through the graph towards candidate entities.
\subsubsection{Representation-based}
\paragraph{RE-GCN} RE-GCN segments the historical TKG into a sequence of KG snapshots based on timestamps. It models the representations of entities and relations within each snapshot using Graph Convolutional Networks. Subsequently, the Recurrent Neural Network is employed to capture and compute the temporal evolution of these representations.
\paragraph{TiRGN} TiRGN extends RE-GCN by introducing a global historical encoder designed to gather repeated historical facts. 
\paragraph{LogCL} LogCL leverages contrastive learning between local and global representations to enhance the quality and robustness of the entity representations.
\subsubsection{LLM-based}
\paragraph{ICL} ICL directly leverages LLM's in-context learning ability. It formulates historical TKG events into sequential prompts without fine-tuning.

\paragraph{GenTKG} GenTKG is a retrieval-augmented generation framework that combines temporal logical rule-based retrieval (TLR) and few-shot parameter-efficient instruction tuning (FIT). It aligns LLMs with TKG forecasting, outperforming traditional methods with minimal training data.

\section{Computation Complexity Analysis}
\label{app:complexity}
In this section, we analyze the computational complexity of the Rule-based Strikingness Measuring Framework (RSMF). Let us define: $N_e$: number of entities, $N_r$: number of relations, $W$: historical window size (number of timestamps), $L$: length of temporal rules (number of events in rule body), $R$: number of mined rules, and $E_w$: average number of events in the historical window. For a target event $f=(s,r,o,t)$, the computation involves:

\paragraph{Peer Event Generation}
Generating peer events by replacing subject, object, or relation yields $O(N_e + N_r)$ candidates.

\paragraph{Rule Grounding}
For each peer event and each rule of length $L$, we search for matching rule bodies in the historical window. The number of possible matches for a length-$L$ rule is bounded by $O\left(E_w \cdot (N_e \cdot W)^{L-1}\right)$. Since the first event can be matched to any of the $E_w$ events in the window, and each subsequent event in the rule chain may involve a new entity and timestamp.

\begin{figure*}[!t]
    \centering
    \subfigure{
        \begin{minipage}[t]{0.3\linewidth}
            \centering
            \includegraphics[width=1.0\textwidth]{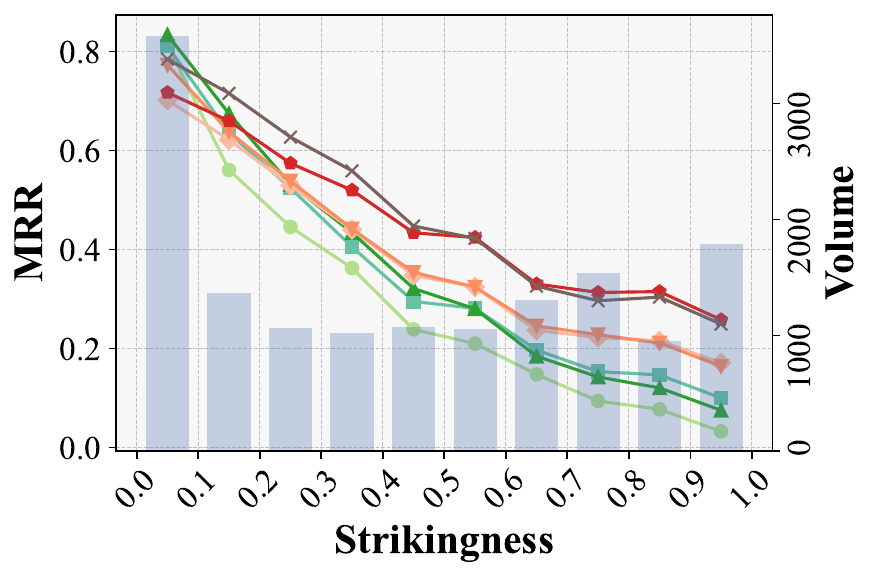}
        \end{minipage}
    }
    \subfigure{
        \begin{minipage}[t]{0.3\linewidth}
            \centering
            \includegraphics[width=1.0\textwidth]{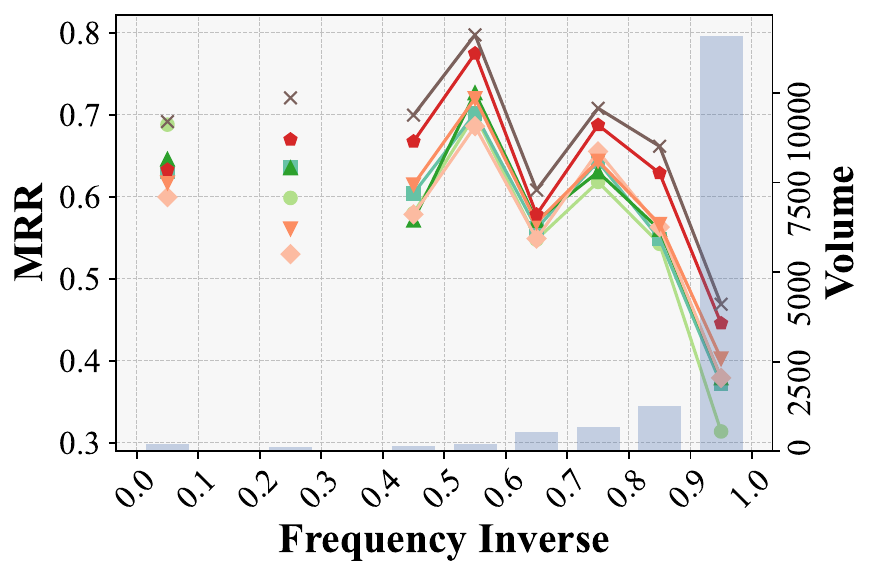}
        \end{minipage}
    }
    \subfigure{
        \begin{minipage}[t]{0.3\linewidth}
            \centering
            \includegraphics[width=1.0\textwidth]{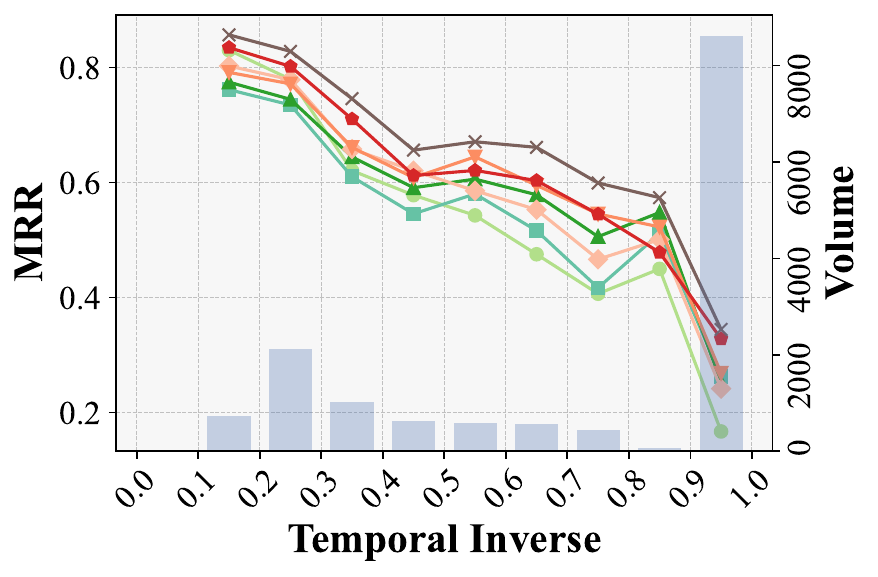}
        \end{minipage}
    }
    \subfigure{
        \begin{minipage}[t]{0.8\linewidth}
            \centering
            \includegraphics[width=1.0\textwidth]{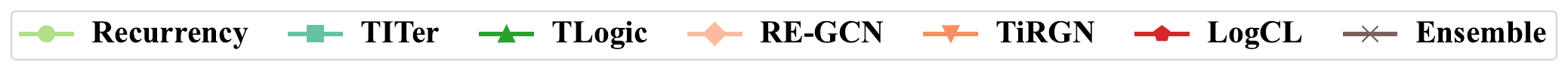}
        \end{minipage}
    }
    \caption{Comparing with other strikingness baselines.}
    \label{baseline_compare}
\end{figure*}

\begin{figure}[!thb]
    \centering
    \subfigure{
        \begin{minipage}[t]{1.0\linewidth}
            \centering
            \includegraphics[width=1.0\textwidth]{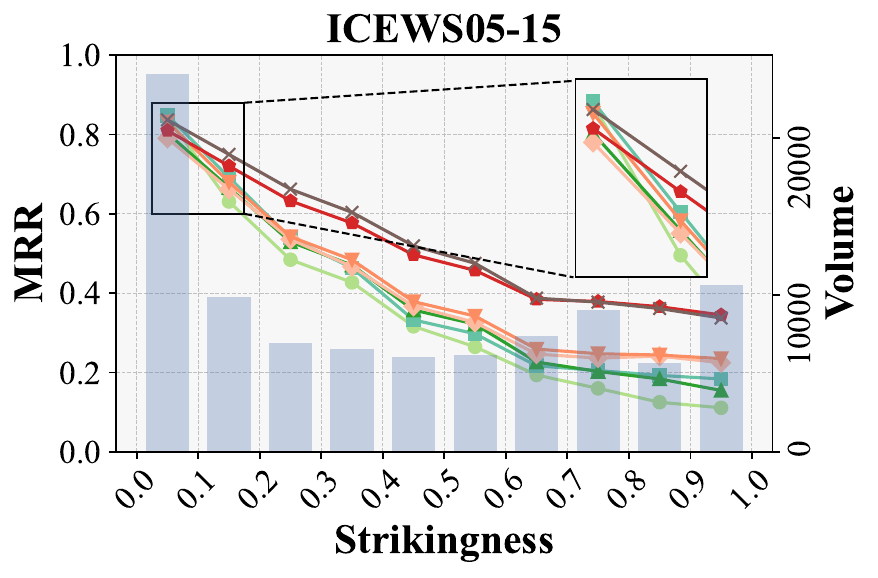}
        \end{minipage}
    }
    \subfigure{
        \begin{minipage}[t]{1.0\linewidth}
            \centering
            \includegraphics[width=1.0\textwidth]{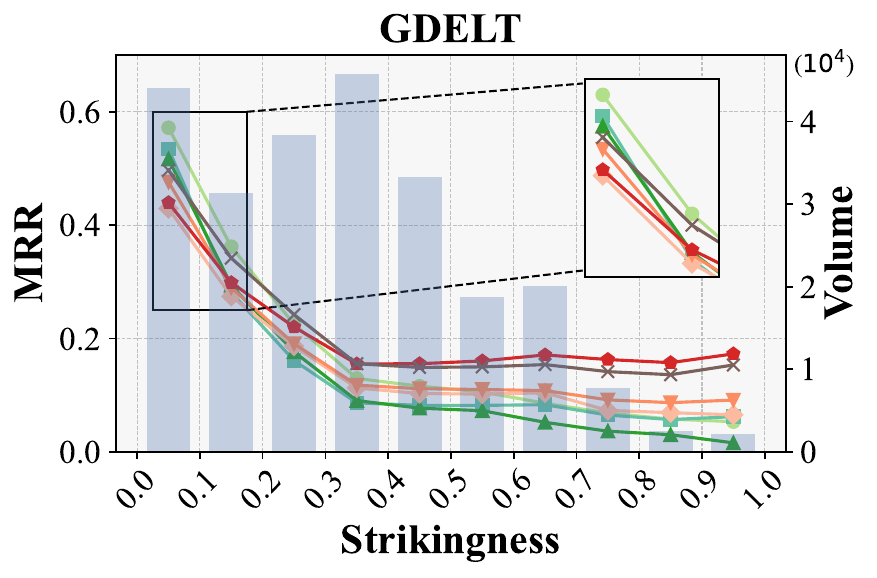}
        \end{minipage}
    }
    \subfigure{
        \begin{minipage}[t]{1.0\linewidth}
            \centering
            \includegraphics[width=1.0\textwidth]{Legend.pdf}
        \end{minipage}
    }
    \caption{Group performances of different models and data volume on ICEWS05-15 and GDELT.}
    \label{Group_results_additional}
\end{figure}

\paragraph{Total Complexity}
Considering all $R$ rules and $(N_e + N_r)$ peer candidates, the overall complexity per query is $\mathcal{O}\left( R \cdot (N_e + N_r) \cdot E_w \cdot (N_e \cdot W)^{L-1} \right)$. When $L=1$, the term $(N_e \cdot W)^{L-1}$ becomes $O(1)$, simplifying to $\mathcal{O}(R \cdot (N_e + N_r) \cdot E_w)$,
which is linear in the window size and entity count. However, for $L \geq 2$, the complexity grows exponentially with $L$, due to the factor $(N_e \cdot W)^{L-1}$.

\subsection{Why First-Order Rules Are Sufficient}
While we acknowledge that higher-order temporal rules ($L \geq$ 2) can capture more complex multi-hop dependencies, we deliberately restrict RSMF to first-order (length‑1) rules for the following reasons, which align with the primary goal of our work: to construct a practical and scalable strikingness-aware evaluation framework rather than to perform exhaustive temporal pattern mining:

\noindent{\textbf{Computational Feasibility}:} As shown above, longer rules lead to exponential growth in grounding complexity, making them infeasible for large-scale TKGs with thousands of entities and fine-grained timestamps (e.g., GDELT with 15-minute granularity).\\
\noindent{\textbf{Adequate Expressiveness for Strikingness}:} Strikingness is primarily concerned with whether an event is expected given its immediate historical patterns. First-order rules already capture the most direct temporal dependencies (e.g., ``if A visited B recently, A may visit B again''), which are sufficient for distinguishing repetitive vs. outstanding events.

We emphasize that our goal is not to claim that first‑order rules are universally optimal for all temporal pattern mining tasks. Rather, for the specific purpose of de-emphasizing trivial repetitions in TKGR evaluation. Future work may explore hybrid or approximate higher‑order strategies where computational resources permit, but such extensions are orthogonal to the core contribution of this paper.

\paragraph{Beyond Circular Reasoning} A legitimate concern is whether our strikingness measure creates a self-fulfilling evaluation: if high strikingness indicates lack of local evidence, methods relying on such evidence might appear to fail by definition. We argue this is not the case.

\begin{itemize}
\item \textbf{Path-based methods are assumed to perform multi-hop reasoning}, yet they perform worst on high-strikingness events. In fact, events that can only be supported by multi‑hop evidence, rather than one‑hop evidence, are more likely to be marked as high-strikingness events by RSMF. This reveals that their real strength lies in fitting shallow, repetitive patterns rather than multi-hop reasoning.
\item \textbf{Representation-based methods also decline} with increasing strikingness, though less severely, confirming that high strikingness corresponds to a general prediction challenge, not merely a lack of local evidence.
\item \textbf{Neighborhood overlap ($NO_f$) analysis} shows that even among high-strikingness events, those with richer historical interactions remain more predictable. This indicates RSMF captures difficulty beyond mere evidence absence.
\end{itemize}
\textbf{Using length-1 rules} ensures our measure is method-agnostic and focuses on immediate temporal expectations. The observed performance differences thus reflect true capability gaps, not circularity. Overall, RSMF does not penalize methods for lacking the evidence it uses; it exposes which methods can reason beyond trivial repetitions.

\section{More Group Results and Analysis}
\label{appendix:group strikingness}
Figure \ref{Group_results_additional} shows the grouped MRR results for baseline models and our proposed ensemble model on the ICEWS05-15 and GDELT datasets. Consistent with our analytical findings in the main manuscript, path-based methods yield better predictions for low-strikingness events, while representation-based approaches excel in high-strikingness scenarios. The ensemble method strikes a trade-off at both ends of the strikingness and demonstrates enhanced performance in the mid-range.
\subsection{Measuring Events with Frequency and Recency}
To contextualize RSMF, we compare it with two intuitive baseline measures: \textit{Frequency Inverse} (Freq Inv) and \textit{Temporal Inverse} (Temp Inv). Both are designed as simple proxies for event strikingness, yet our analysis reveals critical limitations that underscore the necessity of RSMF’s rule-based, peer‑comparative design:

\noindent\textbf{Frequency Inverse (Freq Inv)} measures historical uncommonness of the $(s,r)$ pair:
\begin{equation}
SK_{\text{freq}}(f) = 1 - \frac{\text{count}_{\mathcal{H}}(s,r)}{\max_{(s',r')}( \text{count}_{\mathcal{H}}(s',r'))}.
\end{equation}

\noindent\textbf{Temporal Inverse (Temp Inv)} considers recency of the exact event $(s,r,o)$:
\begin{equation}
Sk_{\text{temp}}(f) = 1 - \exp\left(-\lambda \cdot (t - t_{\text{last}})\right),
\end{equation}
where $t_{\text{last}}$ is its most recent occurrence time, and $\lambda=0.005$. If the event never occurred, $SK_{\text{time}}(f)=1$.

Figure \ref{baseline_compare} shows the group results of different strikingness measurements. 
\paragraph{Distribution Violates the Rarity Principle}
Visual inspection of the volume distributions reveals that both Freq Inv and Temp Inv assign a score ($1.0$) to a substantially larger proportion of test events. In RSMF’s grouping, the volume of events decays sharply as strikingness increases, producing the expected long‑tail distribution where truly outstanding events are rare. In contrast, Freq Inv and Temp Inv show a centralized volume distribution across strikingness bins, with a notably high volume remaining even in the highest bin ($[0.9, 1.0]$). This contradicts the basic premise that outstanding events should be scarce. The inflated high‑strikingness populations arise from inherent simplifications: Freq Inv relies on the long‑tail frequency of $(s,r)$ pairs, and Temp Inv treats any non‑recent exact repeat as striking, regardless of contextual expectation.

\paragraph{Predictive Correlation is Weak or Trivial.}
Model performance grouped by each measure reveals:
\begin{itemize}
\item \textbf{Freq Inv} exhibits erratic, non‑monotonic MRR trends across bins. The performance curve fluctuates without a clear gradient, indicating that frequency alone does not stably correlate with prediction difficulty.
\item \textbf{Temp Inv} shows a modest decline in MRR as strikingness increases, but this primarily reflects the trivial fact that events without recent repetitions are harder to predict.
\end{itemize}
Crucially, \textbf{neither baseline can discriminate between model families}. Under RSMF, the performance gap between path‑based and representation‑based methods widens considerably as strikingness increases. Under Freq Inv and Temp Inv, this gap remains narrow and inconsistent across bins, failing to expose the models’ distinct capabilities, especially for the ensemble method.

\paragraph{Why RSMF is Necessary.}
RSMF’s rule‑grounded peer‑event comparison incorporates both semantic confidence and temporal decay, enabling it to:
\begin{enumerate}
\item produces a realistic long‑tail strikingness distribution,
\item creates a sharp, monotonic gradient of prediction difficulty,
\item reveals systematic differences in model‑family performance, and
\item offer explainable strikingness assessments through rules and peer events.
\end{enumerate}

\subsection{Statistical Test Details}
We conducted two complementary statistical tests to compare model performance between the lowest strikingness bin ($sk<0.2$) and the highest bin ($sk>0.8$):

\noindent{\textbf{Welch’s t‑test}} (independent samples, unequal variances assumed)

\noindent{\textbf{Mann‑Whitney U test}} (non‑parametric, one‑sided alternative that low‑strikingness performance is greater)

Tests were performed separately for Hits@1 and Hits@3 metrics. Sample sizes are 31,078 for the low bin and 9,316 for the high bin across all models. All comparisons yield highly significant results ($p<0.001$). The t‑statistics are exceptionally large ($t>60$), and the U‑statistics are consistently on the order of $10^8$, reflecting both large effect sizes and substantial sample sizes. The consistency between parametric (t‑test) and non‑parametric (U‑test) results confirms the robustness of the conclusion. The extreme significance levels ($p\ll 0.001$) are expected given the large sample sizes ($n>40,000$ combined per test) and large observed differences ($\Delta$ Hits@3$>35\%$ for all models). These statistical tests provide formal confirmation that the performance degradation for high‑strikingness events is not due to chance variation.

\begin{table}[!thb]
    \centering
    \renewcommand{\arraystretch}{0.8}
    \resizebox{0.48\textwidth}{!}{
    \begin{tabular}{c|c|cccc}
    \toprule
    \multirow{2.5}[0]*{\textbf{Model}} & \multirow{2.5}[0]*{\textbf{Type}}  & \multicolumn{4}{c}{\textbf{ICEWS18}} \\
    \cmidrule(lr){3-6}
    ~&~ &$S(0.6,0.7)$ &  $S(0.7,0.8)$ &  $S(0.8,0.9)$& $S(0.9,1.0)$  \\
    \midrule 
    \multirow{2}*{Recurrency} &High $NO_f$  	&10.40	&8.94	&7.26	&4.13\\
    ~&Low $NO_f$ 	&8.05	&5.06	&2.97	&2.20 \\
    \midrule
    \multirow{2}*{TITer}&High $NO_f$ 	&15.71	&13.86	&14.32	&10.22 \\
    ~&Low $NO_f$ 	&10.19	&7.81	&5.11	&4.51 \\
    \midrule
    \multirow{2}*{TLogic}&High $NO_f$	&13.74	&11.32	&12.02	&6.24 \\
    ~&Low $NO_f$ 	&6.56	&4.71	&2.82	&2.05 \\
    \midrule 
    \multirow{2}*{RE-GCN} &High $NO_f$  	&23.55	&23.19	&22.36	&20.43\\
    ~&Low $NO_f$ &15.80	&14.88	&10.91	&12.42 \\
    \midrule
    \multirow{2}*{TiRGN}&High $NO_f$	&22.12	&22.98	&21.37	&18.89 \\
    ~&Low $NO_f$&15.65	&14.86	&10.80	&11.90 \\
    \midrule
    \multirow{2}*{LogCL}&High $NO_f$ &31.95	&32.52	&30.30	&28.17 \\
    ~&Low $NO_f$ &18.96	&21.77	&21.45	&22.71 \\
    \midrule
    \multirow{2}*{Ensemble}&High $NO_f$ &29.09	&29.16	&28.37	&25.95 \\
    ~&Low $NO_f$&17.79	&19.62	&20.25	&21.31 \\
    \bottomrule
    \end{tabular}
    }
    \caption{The Hits@3 metric of High and Low $NO_f$ events within the high-strikingness range on ICEWS18.}
    \label{table:NO2}
\end{table}

\begin{table}[t]
\setlength{\tabcolsep}{2pt}
\centering
\begin{tabular}{lccccc}
\toprule
\multirow{2}{*}{Model Type} &\multicolumn{2}{c}{ICEWS14} &\multicolumn{2}{c}{ICEWS18} \\
\cmidrule(r){2-3} \cmidrule(r){4-5} &$S(0,0.1)$ &$S(0.1,0.2)$ &$S(0,0.1)$ &$S(0.1,0.2)$ \\
\midrule
6-model-H@3  &62.70  &41.85 &45.81 &26.68 \\[0.2em]
5-model-H@3  &76.12  &59.09 &60.46 &40.07 \\[0.2em]
4-model-H@3  &84.85  &69.55 &71.09 &49.89 \\
\bottomrule
\end{tabular}
\caption{The Hits@3 of multi-models' intersection in the low-strikingness groups on ICEWS14 and ICEWS18.}
\label{table:trivial_event}
\end{table}

\section{More Predictability Analysis}
\label{appendix:Study of Event Predictability}
The $NO_f$ measures the degree of overlap between the neighbors of the subject entity and the object entity for the test sample $f=(s,r,o,t)$ in the historical KGs, which indicates the existence of more complex multi-hop historical interactions between the subject and object of event $f$. The formula is as follows:
\begin{flalign}
NO_f=\frac{||N_s\cap N_o ||}{||N_s\cup N_o||}
\end{flalign}
where $N_s=\{o'|(s,r',o',t')\} \cup \{o'|(o',r',s,t')\}$ and $N_o=\{s'|(s',r',o,t')\} \cup \{s'|(o,r',s',t')\}$ denote the set of neighboring entities of s and o, and $t-w\leq t' <t$ is consistent with the window for calculating strikingness. 

The results of predictability analysis on high-strikingness groups of ICEWS18 are shown in Table \ref{table:NO2}. Consistent with our prior findings, future events with higher $NO_f$ exhibit higher prediction accuracy.

\subsection{Predict Pattern on Low-strikingness Events across Different Models}  
We verify whether events in the \textbf{low-strikingness groups} exhibit the same easy patterns, that is, whether different models consistently predict mostly the same trivial events correctly. $S(sk_1, sk_2)$ represents the set of events with strikingness in the range $[sk_1, sk_2)$, and ``n‑model‑H@3" indicates that the Hits@3 predictions of n out of the six models overlap. We calculated the intersection of the Hits@3 metric of models within the low-strikingness groups. The results in Table \ref{table:trivial_event} demonstrate that for events with extremely low strikingness $S(0, 0.1)$, multiple models consistently make correct predictions. However, for events with strikingness in $S(0.1, 0.2)$, the overlap in model predictions drops sharply, indicating that even for trivial events, different models possess distinct prediction patterns. It further explains why the ensemble method in Figure \ref{Group_results} exhibits an enhancement pattern. 

Let $\mathcal{Q}$ be the set of test queries and $\mathcal{M} = {M_1, M_2, \dots, M_N}$ denote $N$ baseline models. For a query $q \in \mathcal{Q}$ and a model $M_i \in \mathcal{M}$, we define an indicator function $\mathbb{I}_{\text{Hits@3}}(M_i, q)$ to indicate whether model  $M_i$ achieves Hits@3 = 1 on query $q$. If yes, $\mathbb{I}_{\text{Hits@3}}(M_i, q)=1$, otherwise 0. If at least $n$ models in $\mathcal{M}$ simultaneously achieve Hits@3 = 1 on $q$, i.e., $\sum_{i=1}^{N} \mathbb{I}_{\text{Hits@3}}(M_i, q) \ge n$ .
We say the query $q$ satisfies the $n$-Model-H@3 condition. Consequently, the overall $n$-Model-Hits@3 performance on the test set $\mathcal{Q}$ is reported as follows:

\begin{equation}
\frac{\sum_{q\in\mathcal{Q}}\sum_{i=1}^{N} \mathbb{I}_{\text{Hits@3}}(M_i, q) \ge n}{|\mathcal{Q}|} .
\label{eq:condition}
\end{equation}

\begin{figure}[!thb]
    \centering
        \includegraphics[width=0.8\linewidth]{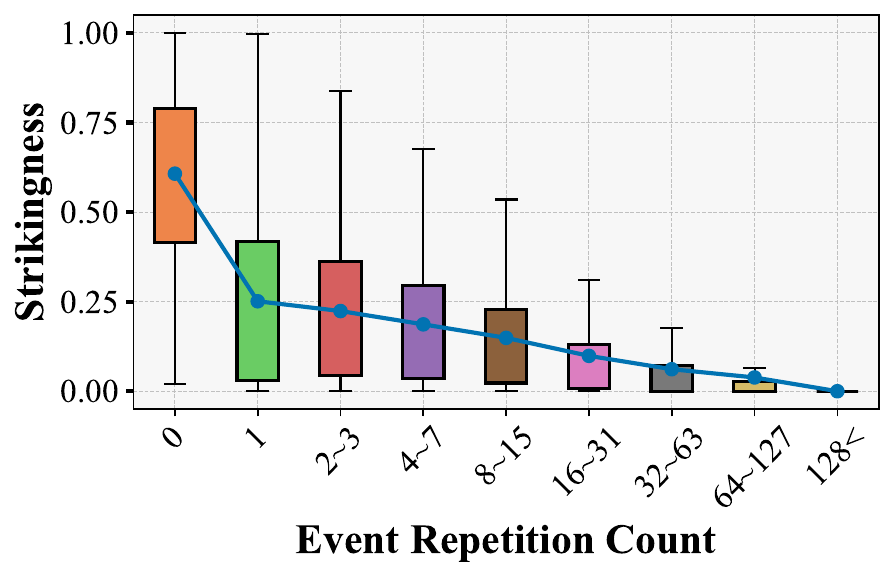}
    \caption{Relation between strikingness and novelty.}
    \label{Novelty Study}
\end{figure}

\begin{figure}[!t]
    \centering
    \includegraphics[width=0.8\linewidth]{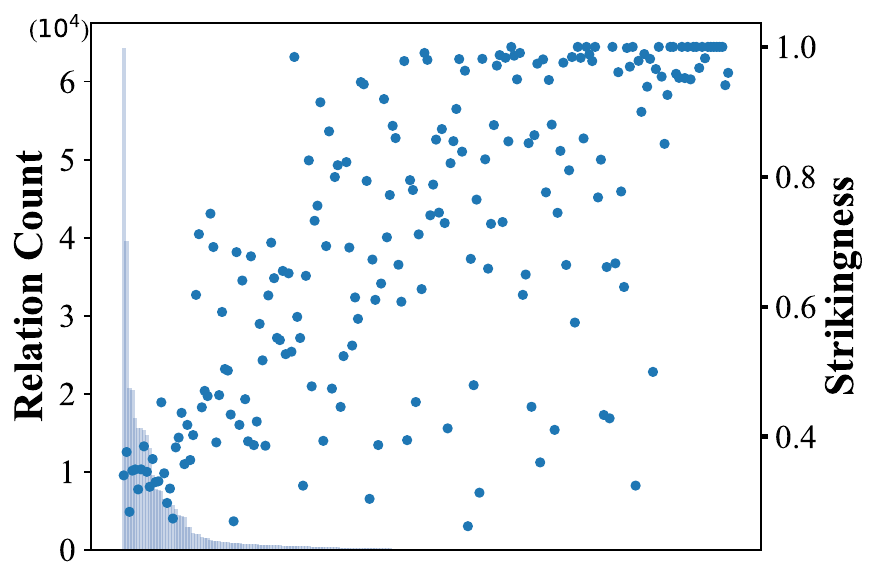}
    \caption{Strikingness and count of events with different relations on ICEWS18.}
    \label{Few-shot Study}
\end{figure}

\begin{figure*}[!t]
    \centering
    \subfigure[ICEWS14]{
        \begin{minipage}[t]{0.3\linewidth}
            \centering
            \includegraphics[width=1.0\textwidth]{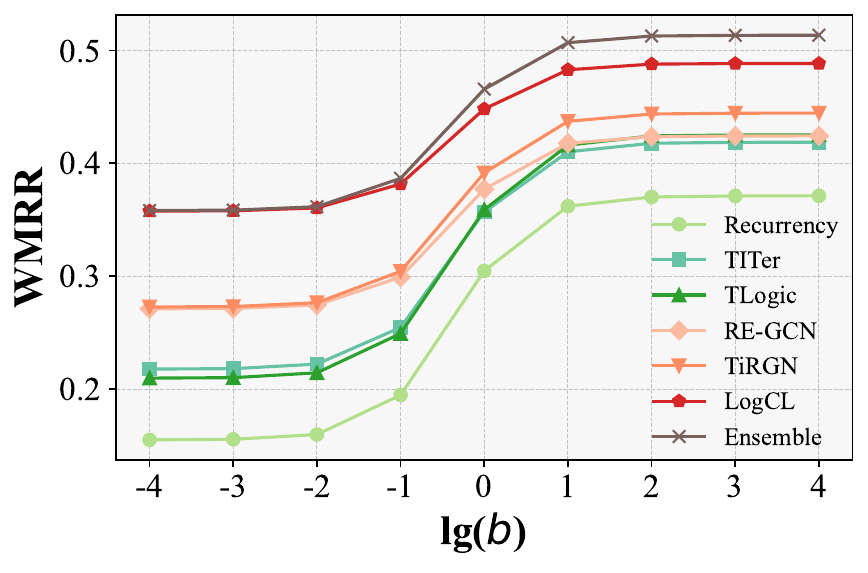}
        \end{minipage}
    }
    \subfigure[ICEWS05-15]{
        \begin{minipage}[t]{0.3\linewidth}
            \centering
            \includegraphics[width=1.0\textwidth]{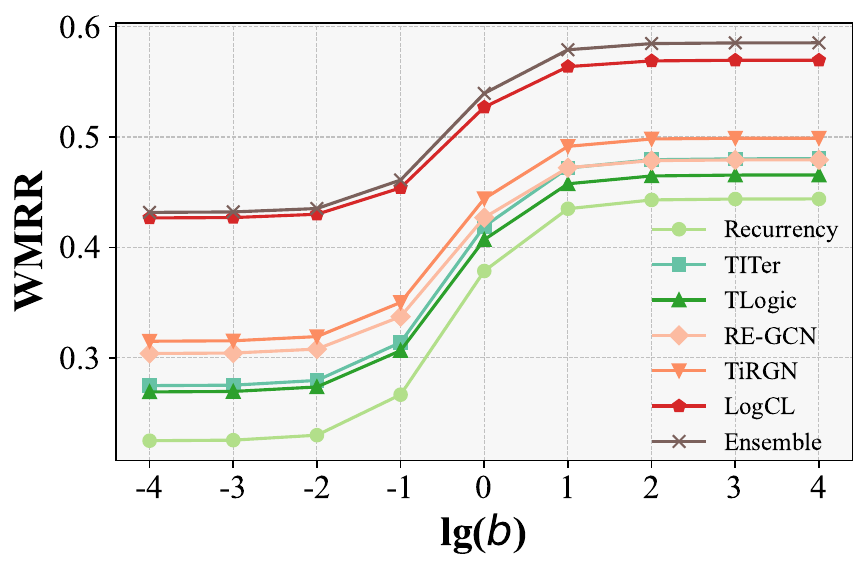}
        \end{minipage}
    }
    \subfigure[GDELT]{
        \begin{minipage}[t]{0.3\linewidth}
            \centering
            \includegraphics[width=1.0\textwidth]{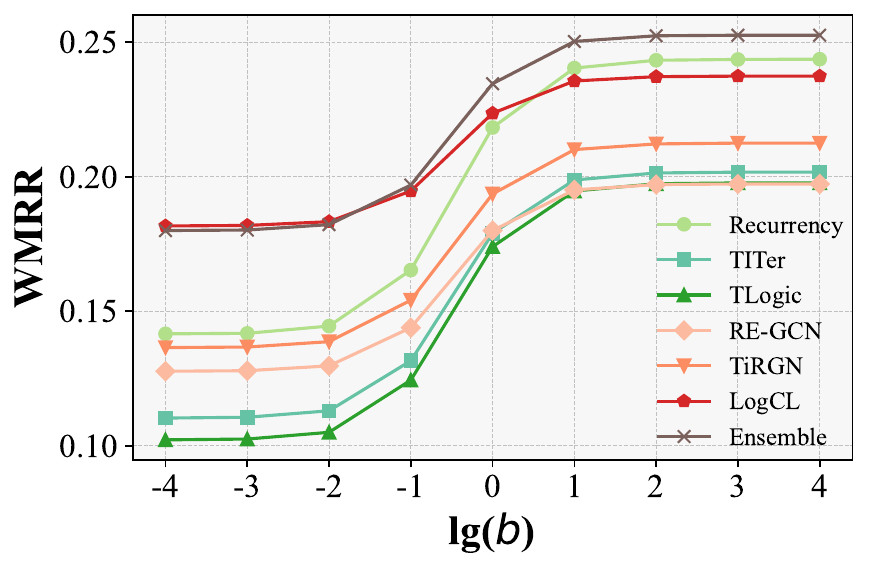}
        \end{minipage}
    }
    \subfigure{
        \begin{minipage}[t]{0.8\linewidth}
            \centering
            \includegraphics[width=1.0\textwidth]{Legend_1.pdf}
        \end{minipage}
    }
    \caption{WMRR with different bias $b$ on ICEWS14, ICEWS05-15, and GDELT.}
    \label{select_b_more}
\end{figure*}

\section{Analysis of Strikingness}
\label{appendix: Analysis Sensitivity}
\subsection{Characteristics of Strikingness}
We conduct analysis experiments to validate four aspects of outstanding events: Novelty, Rarity, Context Dependence, and Time Sensitivity.

\paragraph{Novelty of Outstanding Event} 
In Figure \ref{Novelty Study}, we explored the relationship between events' historical repetition count and strikingness on ICEWS14. The boxes represent the distribution of strikingness for events with a given repetition count, while the blue line illustrates the average strikingness. It can be observed that the strikingness of events decreases sharply as the historical repetition count increases, with the most pronounced drop occurring between first-occurring events and those that have occurred before. Nevertheless, the box plot reveals that even first-occurring events can demonstrate low strikingness. This is because, although an event may not have occurred historically, it can still be considered anticipated as long as there exists sufficient rules to support its occurrence. Conversely, highly frequent events are typically unlikely to be perceived as outstanding due to their routine nature.

\paragraph{Rarity of Outstanding Event}
Additionally, we further explored the relationship between strikingness and event relation types in Figure \ref{Few-shot Study}. As shown, events in TKGs, categorized by relation types, generally follow a long-tailed distribution. Moreover, the strikingness of relation-specific events tends to increase as the count of events associated with a relation decreases, suggesting that events with few-shot relation are more likely to be outstanding. However, it is also observed that some events with few-shot relation exhibit very low strikingness. It is because, although events involving few-shot relations constitute a relatively small proportion, they may occur repeatedly.

\begin{figure}[!t]
    \centering
    \subfigure{
        \begin{minipage}[t]{0.46\linewidth}
            \centering
            \includegraphics[width=1.05\linewidth]{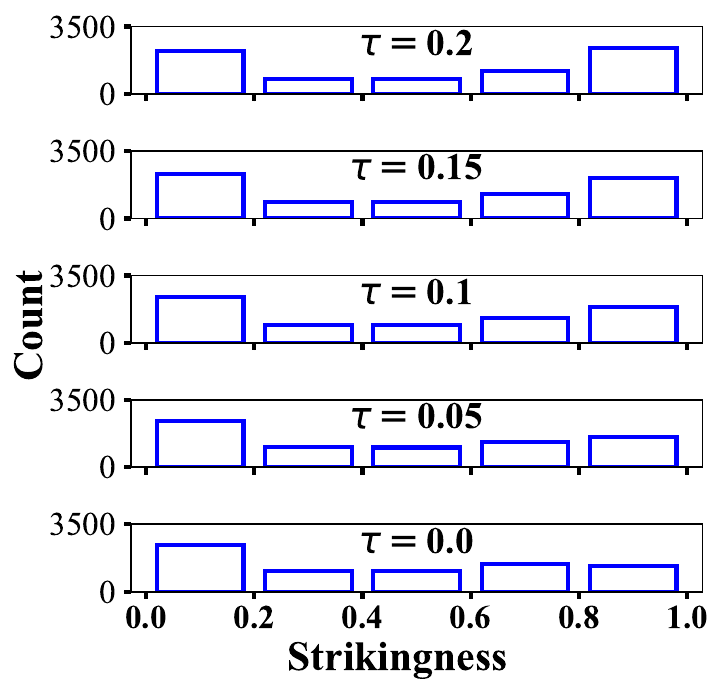}
        \end{minipage}
    }
    \subfigure{
        \begin{minipage}[t]{0.46\linewidth}
            \centering
            \includegraphics[width=1.05\linewidth]{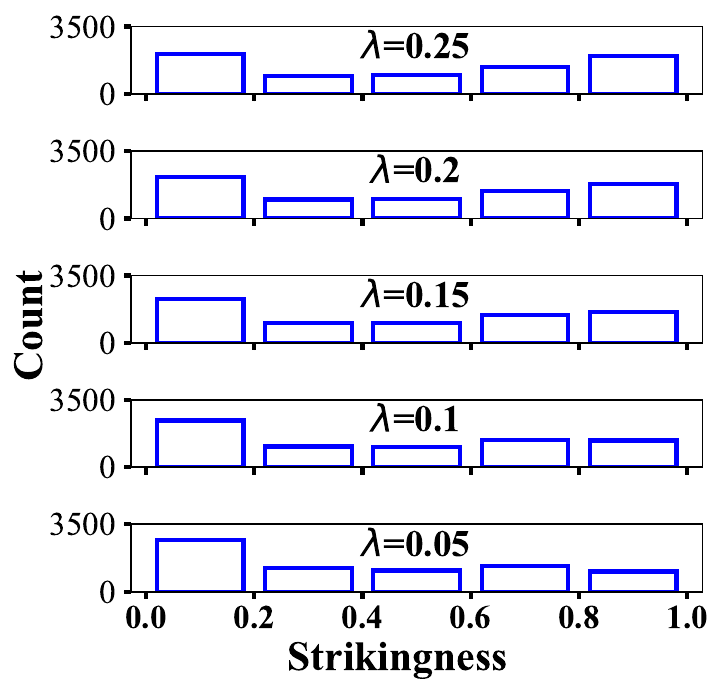}
        \end{minipage}
    }
    \subfigure{
        \begin{minipage}[t]{0.46\linewidth}
            \centering
            \includegraphics[width=1.05\linewidth]{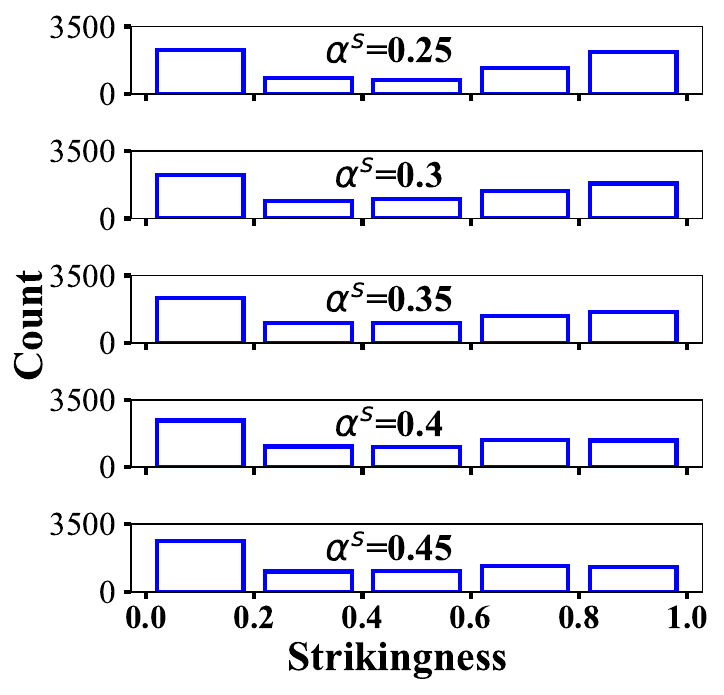}
        \end{minipage}
    }
    \subfigure{
        \begin{minipage}[t]{0.46\linewidth}
            \centering
            \includegraphics[width=1.05\linewidth]{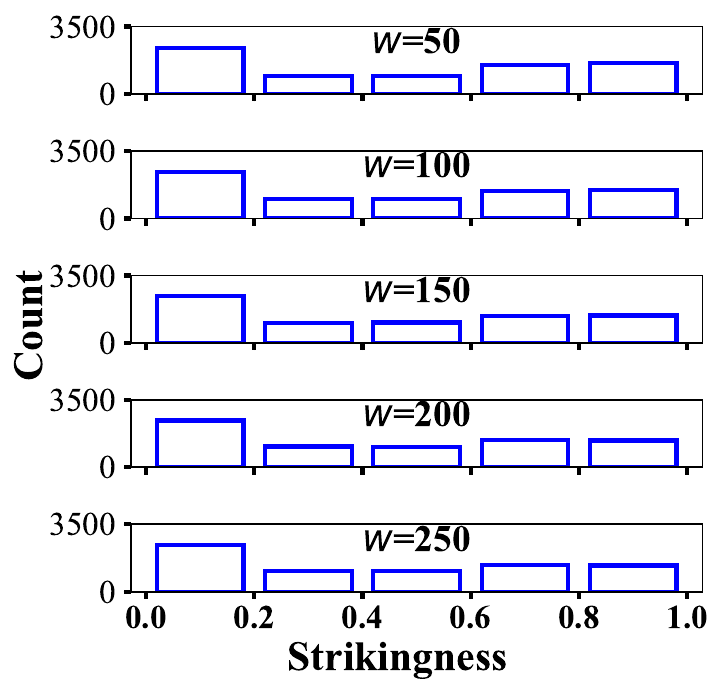}
        \end{minipage}
    }
    \caption{Distribution on ICEWS14 with different hyperparameters.}
    \label{Dist Study}
\end{figure}
\begin{figure}[!t]
    \centering
    \subfigure{
        \begin{minipage}[t]{0.46\linewidth}
            \centering
            \includegraphics[width=1.05\linewidth]{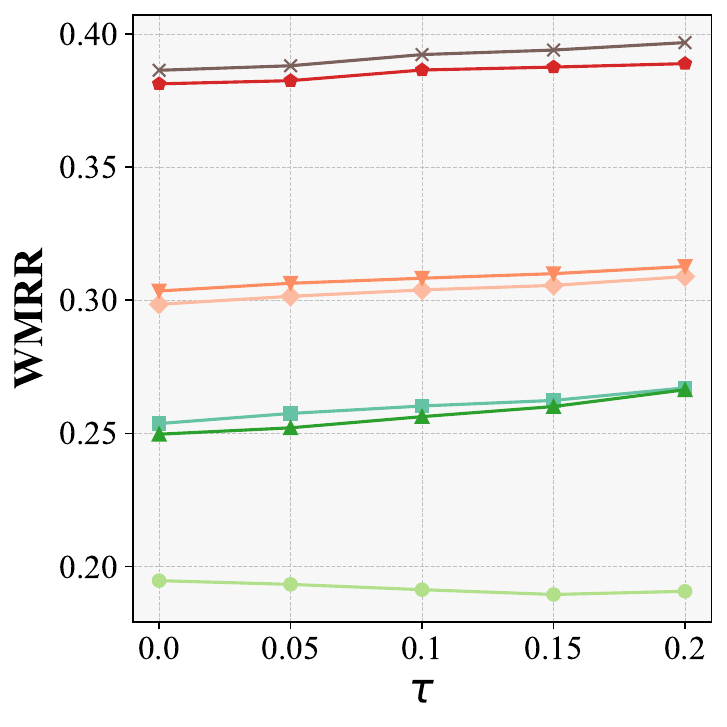}
        \end{minipage}
    }
    \subfigure{
        \begin{minipage}[t]{0.46\linewidth}
            \centering
            \includegraphics[width=1.05\linewidth]{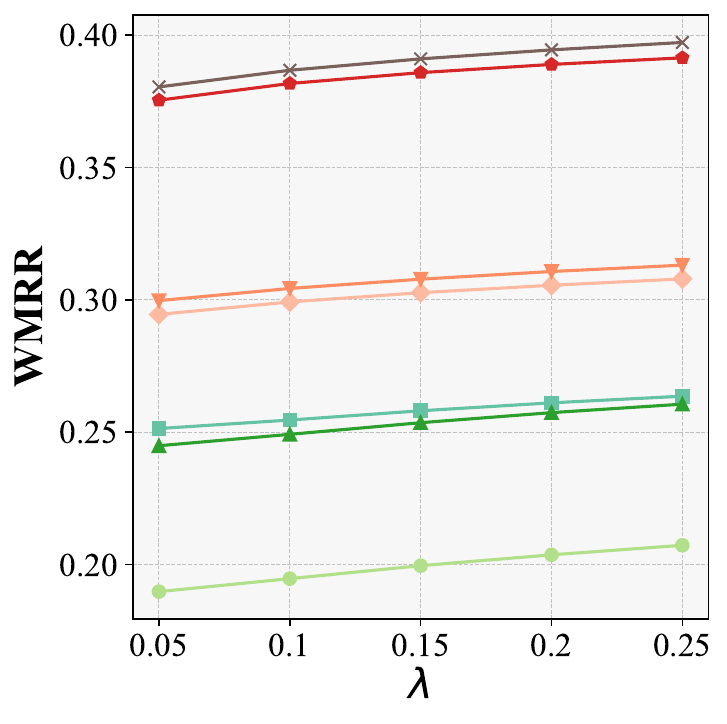}
        \end{minipage}
    } 
    \subfigure{
        \begin{minipage}[t]{0.46\linewidth}
            \centering
            \includegraphics[width=1.05\linewidth]{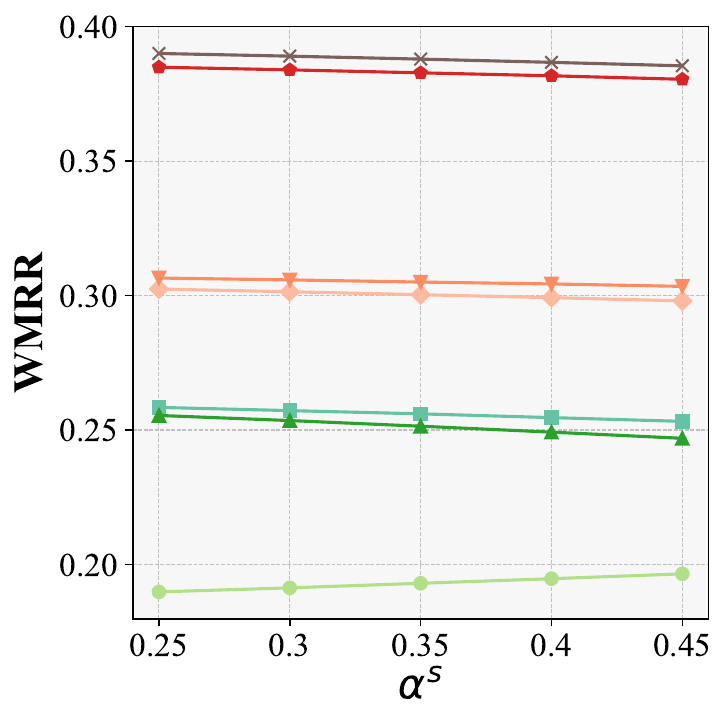}
        \end{minipage}
    }
    \subfigure{
        \begin{minipage}[t]{0.46\linewidth}
            \centering
            \includegraphics[width=1.05\linewidth]{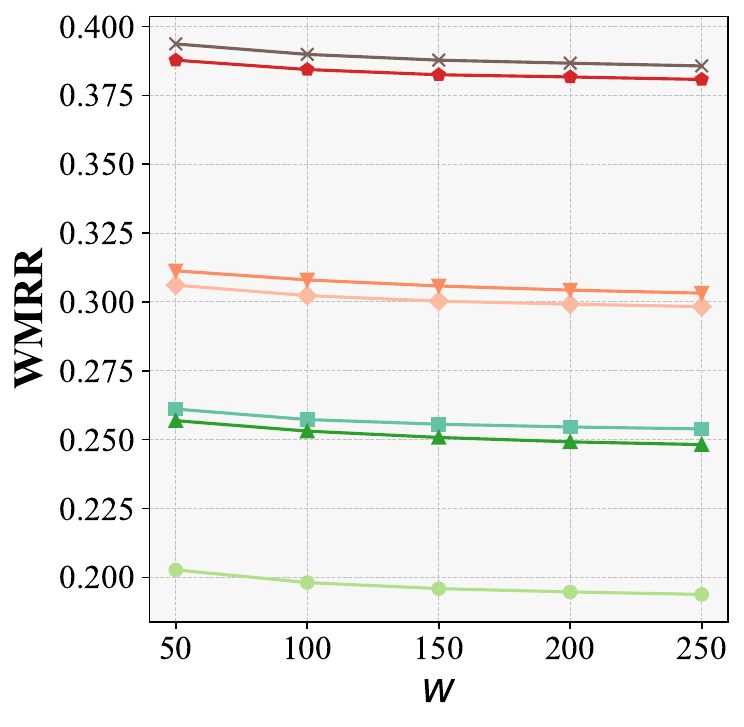}
        \end{minipage}
    }
    \subfigure{
        \begin{minipage}[t]{0.85\linewidth}
            \centering
            \includegraphics[width=1.0\textwidth]{Legend.pdf}
        \end{minipage}
    }
    \caption{Performance on ICEWS14 under strikingness-aware evaluation framework with $b=0.1$ and different hyperparameters.}
    \label{Hyper Study}
\end{figure}

\begin{figure*}[!t]
    \centering
    \subfigure[ICEWS14]{
        \begin{minipage}[t]{0.4\linewidth}
            \centering
            \includegraphics[width=1.05\linewidth]{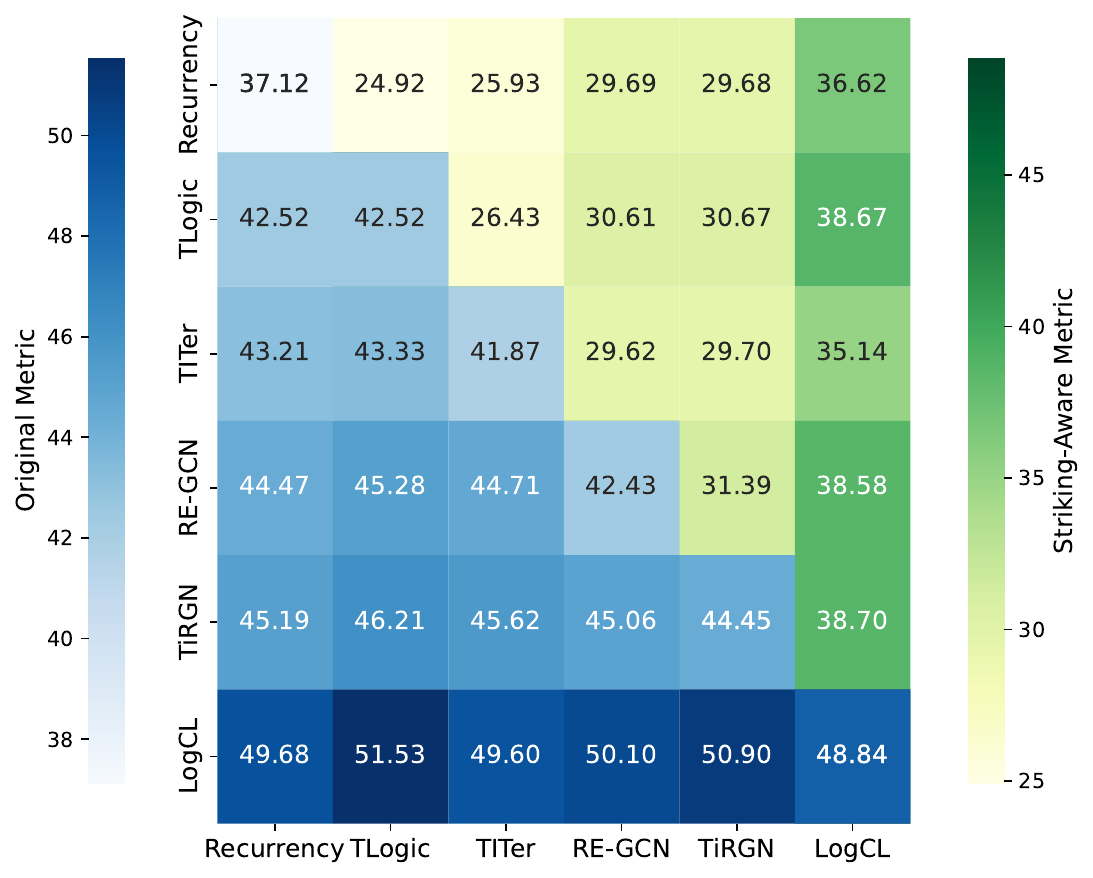}
        \end{minipage}
    }
    \subfigure[ICEWS18]{
        \begin{minipage}[t]{0.4\linewidth}
            \centering
            \includegraphics[width=1.05\linewidth]{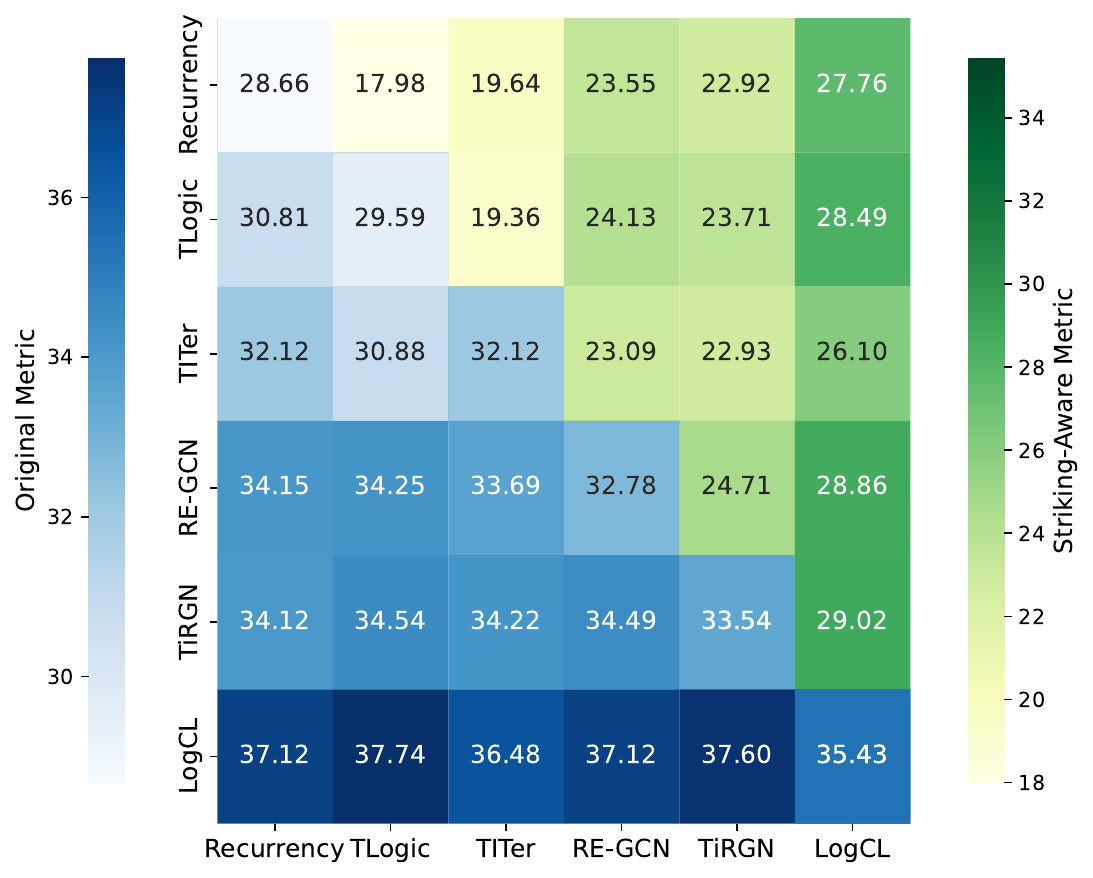}
        \end{minipage}
    }
    \subfigure[ICEWS05-15]{
        \begin{minipage}[t]{0.4\linewidth}
            \centering
            \includegraphics[width=1.05\linewidth]{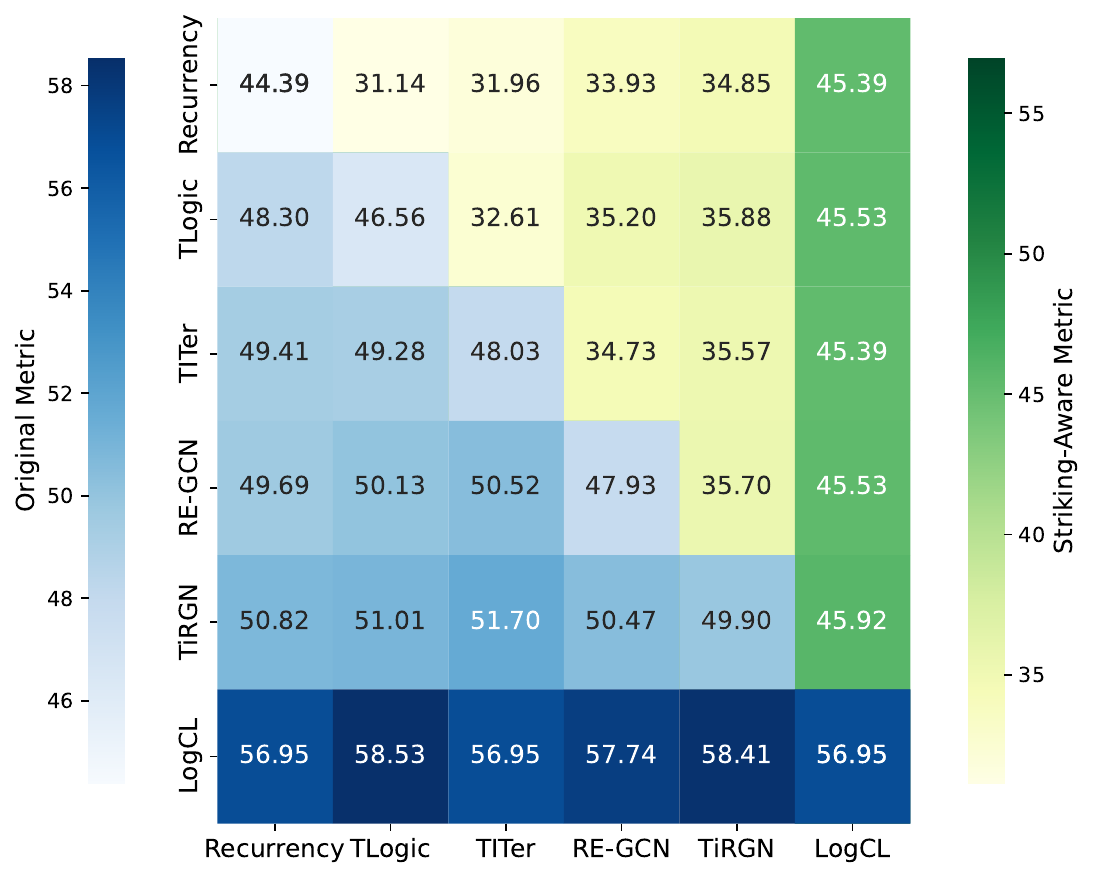}
        \end{minipage}
    }
    \subfigure[GDELT]{
        \begin{minipage}[t]{0.4\linewidth}
            \centering
            \includegraphics[width=1.05\linewidth]{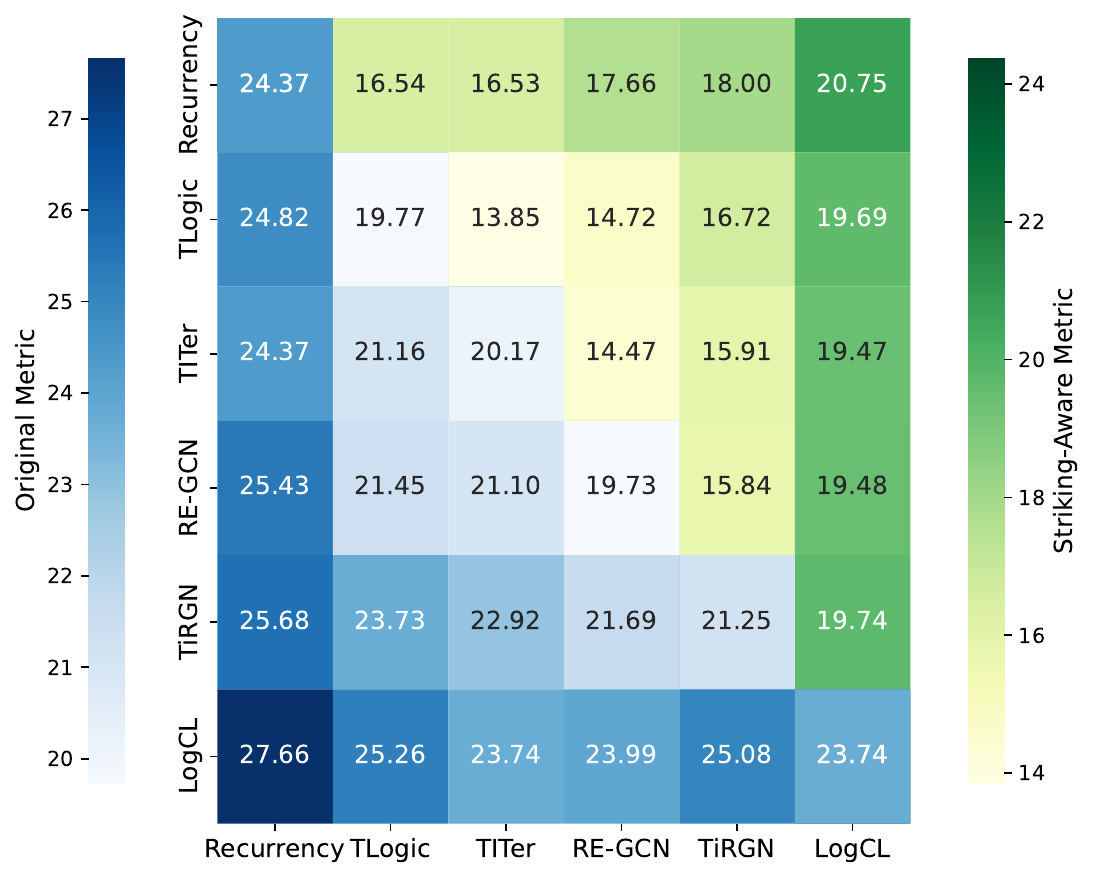}
        \end{minipage}
    }
    \caption{Ensemble results of different baseline models. The lower triangle (including the diagonal) represents the original metrics and the upper triangle corresponds to the striking-aware metrics}
    \label{ensemble_result}
\end{figure*}

\paragraph{Context Dependence, and Time Sensitivity of Outstanding Event} In Figure \ref{Dist Study}, we analyze the distribution of event strikingness under different parameter settings. The results demonstrate that the strikingness distribution changes with the variation of parameters, indicating that the proposed RSMF effectively selects the expected outstanding events by adjusting the specific parameters. Specifically, $\tau$ is the confidence threshold for constraining the number of applicable rules, $\alpha^s$ denotes the weight of the entity in strikingness calculation, and $w$ influences the window length of historical knowledge graphs. Adjustments to these parameters will affect the context information of future events, reflecting the context dependence of outstanding events. Similarly, $\lambda$ controls the temporal decay rate of rules, showcasing the time sensitivity. It should be emphasized that these parameters, unlike model hyperparameters, are not designed for achieving optimal model performance. Our goal is to establish a well-calibrated evaluation framework. The parameters for strikingness calculation in RSMF are instrumental in implementing its core definition and generating a long-tailed distribution of event strikingness (Figure \ref{Dist Study}). To facilitate reproducibility and promote community adoption, we provide standardized configurations as recommended defaults.

\subsection{Hyperparameter Sensitivity of Evaluation}
The construction of RSMF involves several hyperparameters. As shown in Figure \ref{Dist Study}, these hyperparameters can change the values and distribution of strikingness. Consequently, a natural concern arises: could these hyperparameters affect the outcomes of the strikingness‑aware evaluation framework, that is, causing a model A to outperform model B under one hyperparameter setting while underperforming under another? Figure \ref{Hyper Study} shows the influence of the involved hyperparameters on evaluation results on ICEWS14. It can be observed that although the WMRR values change, the relative performance ranking among models remains consistent throughout. This indicates that the strikingness-aware evaluation framework is robust to hyperparameter variations and can be reliably applied to assess TKGR models.

We also report the evaluation results with respect to parameter $b$ on the other datasets in Figure \ref{select_b_more}. As shown, the position at which evaluation results (i.e., model rankings) change varies across different datasets. At $b = 0.1$, both the WMRR and the relative model rankings remain comparatively stable, which is why we recommend this as the default.

\section{Ensemble Combinations}
\label{appendix:Ensemble Combinations}
We hypothesize that each baseline offers distinct predictive perspectives and thus explore various ensemble combinations for prediction. The results of all possible ensemble combinations are presented in Figure \ref{ensemble_result}, where the lower triangle (including the diagonal) represents the original metrics and the upper triangle corresponds to the striking-aware metrics. As observed, nearly all ensemble combinations achieve significant improvements on the original metrics. However, such gains are negligible on the striking-aware metrics. This is because ensemble methods primarily enhance the prediction of low-strikingness events, while for high-strikingness events, they may even introduce conflicts.

\end{document}